\newtheorem{theorem}{Theorem}
\newtheorem{definition}{Definition}
\newtheorem{proof}{Proof}
\begin{document}

\title{QFree: A Universal Value Function Factorization for Multi-Agent Reinforcement Learning}

\author{Rizhong~Wang,~Huiping~Li,~\IEEEmembership{Senior Member,~IEEE,}~Di~Cui and~Demin Xu
\thanks{R. Wang, H. Li, D. Cui and D. Xu are with School of Marine Science and Technology, Northwestern Polytechnical University, Xi'an 710072, China (e-mail: rizhongwang@mail.nwpu.edu.cn; lihuiping@nwpu.edu.cn; dicui@mail.nwpu.edu.cn; xudm@nwpu.edu.cn)}
\thanks{Manuscript received October 8, 2023; revised October 16, 2023.}}

\markboth{Journal of \LaTeX\ Class Files,~Vol.~14, No.~8, October~2023}%
{Shell \MakeLowercase{\textit{et al.}}: A Sample Article Using IEEEtran.cls for IEEE Journals}

\IEEEpubid{0000--0000/00\$00.00~\copyright~2023 IEEE}

\maketitle

\begin{abstract}
Centralized training is widely utilized in the field of multi-agent reinforcement learning (MARL) to assure the stability of training process. Once a joint policy is obtained, it is critical to design a value function factorization method to extract optimal decentralized policies for the agents, which needs to satisfy the individual-global-max (IGM) principle. While imposing additional limitations on the IGM function class can help to meet the requirement, it comes at the cost of restricting its application to more complex multi-agent environments. In this paper, we propose QFree, a universal value function factorization method for MARL. We start by developing mathematical equivalent conditions of the IGM principle based on the advantage function, which ensures that the principle holds without any compromise, removing the conservatism of conventional methods. We then establish a more expressive mixing network architecture that can fulfill the equivalent factorization. In particular, the novel loss function is developed by considering the equivalent conditions as regularization term during policy evaluation in the MARL algorithm. Finally, the effectiveness of the proposed method is verified in a nonmonotonic matrix game scenario. Moreover, we show that QFree achieves the state-of-the-art performance in a general-purpose complex MARL benchmark environment, Starcraft Multi-Agent Challenge (SMAC).
\end{abstract}

\begin{IEEEkeywords}
Multi-agent reinforcement learning, value function factorization method, individual-global-max principle, advantage function.
\end{IEEEkeywords}

\section{Introduction}

\IEEEPARstart{M}{ulti-agent} cooperative systems are widely used in the fields such as military, sensor networks and autonomous driving \cite{Cao2013, Panait2005, Busoniu2008,Li2023}. Reinforcement learning is a promising solution to enhance the intelligent level of multi-agent cooperative systems, which has shown great success in game decision making \cite{Silver2017}, robot control \cite{Levine2016}, and biological protein structure prediction \cite{Tunyasuvunakool2021}. In practical implementation of multi-agent reinforcement learning (MARL), however, the environment state of the individual agent in multi-agent systems is sensitive to the actions of other agents, making the training process unstable \cite{Tuyls2017}. Moreover, practical constraints such as limited sensor capabilities make it impossible for the agents to perceive global information, which can intensify the instability of training process \cite{Oliehoek2016}. One approach to tackle this instability is through centralized training, which takes individual agents as whole and combines their observable information, eliminating mutual influence between agents \cite{Lowe2017}.

Centralized training introduces the following new challenges  \cite{Yang2020,Foerster2022}. (i) As the number of agents in a multi-agent system grows, the exponentially increased action space dimension for optimization will lead to the ``dimensional explosion" phenomenon, posing a great challenge to cope with the training complexity; (ii) in a centralized training setting, individual agents are unable to discern their own independent reward values and comprehend their respective contributions to the overall reward optimization process. Consequently, this can lead to the emergence of ``lazy agents" who fail to actively contribute to the collective objective.
\IEEEpubidadjcol

Centralized training and decentralized execution (CTDE) has emerged as a promising solution to address the above challenges faced in multi-agent systems  \cite{Kraemer2016,Oliehoek2018,Hong2022,Tan2014,Hu2023}. In this approach, the joint value function is trained in a centralized manner, which allows the agent to share observed information and remove interaction between agents during the training process. After that, following the Individual-global-max (IGM) principle, each agent constructs its individual valued function by factorizing the joint one.  The ``dimensional explosion" issue is avoided because the optimal action sequence for the agents produced by taking the maximizing operation on the global joint value function is guaranteed to be equivalent to the one optimized by their individual value function. Furthermore, the individual value function encourages agents to actively participate in achieving the collective objective to eliminate ``lazy agents". Agent then operates independently using only its own localized observations and the learned individual valued function. This approach has provided satisfactory results in classical MARL benchmark environments like the Starcraft Multi-Agent Challenge (SMAC) and demonstrates its effectiveness in various practical scenarios, showing promising potential to be applied in MARL \cite{Samvelyan2019}.

Note that finding a value function factorization method satisfying the IGM principle is critical in CTDE-based algorithms, which ensures the joint value function of the multi-agent system is consistent with the factorized individual one during policy evaluation. Though the conditions in the principle usually hold by imposing constraints on the properties of value function, most practical complex scenarios cannot meet the restricted properties, which motivates this study. In this paper, a universal value function factorization method called QFree in the framework of CTDE is proposed for the cooperation of multi-agent systems. The main contributions can be summarized as follows.
\begin{itemize}
	\item A novel advantage function based method is designed to develop mathematical equivalent formulation of the IGM principle. This method transforms the original conditions in IGM principle into a universal form, which is a completely equivalent factorization without imposing any constraints on the factorized advantage functions, and can remove the conservatism. We have theoretically proved that this is a necessary and sufficient factorization.
	
	\item A new mixing network architecture and an MARL algorithm are designed to fulfill the IGM principle. The network enables the factorization of the joint value function for all kinds of IGM function class, and the new loss function is designed by considering the equivalent conditions in IGM principle as regularization term.
	
	\item The proposed method demonstrates state-of-the-art performance compared to other algorithms in the challenging MARL complex environment SMAC. 
\end{itemize}

The rest of this paper is organized as follows: Section \ref{Related Work} describes the related work in the field of MARL. Section \ref{Preliminaries} presents the fundamentals of MARL. The proposed algorithm is designed in detail in Section \ref{Introducing my algorithm}. The experiments and comparison studies are presented in Section \ref{Experiments}. We present conclusions in Section \ref{Conclusion}.

\section{Related Work}
\label{Related Work}
Cooperative MARL originally relies on tabular representation \cite{Busoniu_2006}, which is only suitable for the system with small state and action spaces. To address the challenges imposed by high-dimensional state and action spaces, the method integrating deep learning into MARL has been developed \cite{Vincent_2018}.

Similar to single-agent reinforcement learning algorithms, MARL algorithms can be categorized into two main types: value function-based methods and policy gradient-based methods \cite{Zhang_2021}. Within the policy gradient-based approach, a popular framework is the actor-critic method. In this framework, each agent shares the same critic network to learn the value function and improve the policies. Because each agent has its own actor network, it is capable of using CTDE fashion. Following this framework, multi-agent deep deterministic policy gradient (MADDPG) method was designed, which extends DDPG, the classical reinforcement learning algorithm for the single agent, into the multiple case \cite{Song_2021}. Considering the superior performance of the proximal policy optimization (PPO) algorithm, \cite{Yu_2022} further developed multi-agent PPO (MAPPO) for MARL. Additionally,  \cite{Iqbal_2018} designed the multi-agent actor-attention-critic (MAAC) algorithm, which introduces the attention mechanism into the actor-critic framework to improve the performance of MARL. To cope with the ``lazy agents",  counterfactual multi-agent policy gradients (COMA) algorithm is developed in \cite{Foerster2022}, which employs counterfactual baseline technology within the actor-critic framework. Note that the input space of the shared critic network will increase with the number of agents, leading to the ``dimensional explosion" in policy gradient-based methods.

The earliest value function-based method is independent Q-learning (IQL), which aims to make the agent in multi-agent systems independent of the learning policy \cite{Tampuu_2017}. In order to improve the learning efficiency and stability, value function factorization method based on CTDE framework has become a commonly adopted approach in value function-based MARL. To meet the IGM principle, value-decomposition networks (VDN) algorithm \cite{Sunehag2018} directly represented the joint value function by a sum of the individual value functions. On this basis, the attention mechanism is introduced into VDN algorithm to adjust the coefficient in the sum \cite{Wei_2022}. Compared with VDN algorithm, QMIX proposed in \cite{Rashid2020} relaxed the sum constraints by considering the joint value function as a monotonic function of the individual value functions using the hypernetwork approach. Furthermore, the work in \cite{Wang2021} employed the duplex dueling structure to replace the value function in IGM principle by the advantage function, reducing the conservatism of the original principle. WQMIX and OW-MIX adopt distinct weighting schemes for the joint value function using the temporal difference (TD) errors, allowing for a more relaxed adherence to the monotonicity constraint imposed by QMIX method \cite{Rashid2021}. While all of the above methods have improved the structure of the mixing network by designing sufficient conditions to satisfy the IGM principle but not the necessary and sufficient conditions \cite{Hu2021}. Though QTRAN achieves a complete factorization that satisfies the IGM principle by utilizing the three value networks \cite{Son2019}, it does not perform well in complex environments because simultaneously optimizing three networks are challenging. Consequently, effectively  developing the value function factorization method without restricting value function's class remains an open challenge.

\section{Preliminaries}
\label{Preliminaries}

\subsection{Dec-POMDP and Deep Q-learning}
\label{Dec-POMDP and Deep Q-learning}

We first model MARL with a decentralized partially observable Markov decision process (Dec-POMDP), which can be represented by a tuple $G = \{N, S, A, P, r, Z, O,\gamma\}$ \cite{Ong_W2016,Velez2015}. Here, $N=\{1,...,n\}$ is the collection of $n$ agents. $s \in S$ denotes the global state of the multi-agent system's environment. Agent $i \in N$ will choose its own action $a_i \in A$ at each time step, and the joint action is represented by $\bm{a}=[a_i]_{i=1}^n \in A^N$. The system dynamics is defined as $P(s'|s,a):S \times A \times S \rightarrow [0,1]$, where $s'$ is the global state at next step. Because of the limited observability, individual agents can only observe partial environmental information $o \in O$, which is taken as an unknown function of global state $s$ with $o_i = O(s, i)$. The global reward function of the multi-agent system is $r(s,\bm{a}):S \times A^N \rightarrow R\in\mathbb R$, and $\gamma \in [0,1)$ is the discount factor.

In reinforcement learning, we define $\pi(a_t|s_t)$ as the stochastic policy of the agent at state $s_t$, and the agent maximizes the return $G_t=R_{t+1} + \gamma R_{t+2} + \gamma^2 R_{t+3} + ...$ at current moment with respect to the policy $\pi(a_t|s_t)$. In order to evaluate the policy, we introduce the action value function: $Q_\pi(s_t,a_t)=\mathbb{E}[G_t|s_t,a_t]$. The optimal policy can be found by maximizing the optimal action value function $Q^*_\pi(s_t,a_t)$ \cite{Sutton_Barto1998}:
\begin{equation}\label{optimal policy}
	\pi^*(a_t|s_t)=
	\begin{aligned}
			\begin{cases}
			1 & a_t=\mathop{\arg\max}\limits_{a \in A}Q^*_\pi(s_t, a),\\
			0 & otherwise.
		\end{cases}
	\end{aligned}
\end{equation}
In a partially observable environment, the multi-agent system uses $\bm z_t=[o_{t-k},a_{t-k};o_{t-k+1},a_{t-k+1},...,o_t,a_t]$, the local action-observation history information of the last $k$ steps, to estimate $Q_\pi(s_t,a_t)$, i.e., $Q_\pi(s_t,a_t) \approx Q_\pi(\bm z_t,a_t)$.

The deep Q-network (DQN) algorithm employs a deep neural network to approximate the optimal action value function $Q^*_\pi(s_t,a_t)$. By leveraging the dataset $<s, s', r, a>$ derived from the agent's exploration of the environment, the parameter $\bm \theta$ of DQN is iteratively optimized. The training loss function of DQN is determined based on the TD error in reinforcement learning:
\begin{equation}\label{loss function}
	\begin{aligned}
		\mathcal{L}(\bm \theta)=(r+\gamma\mathop{\max}\limits_{a' \in A(s')}Q(s',a';\bm \theta^-)-Q(s,a;\bm \theta))^2,
	\end{aligned}
\end{equation}
where $\bm \theta^-$ is the parameters of the target network, which shares the same structure as the Q-network and performs synchronized updates with the Q-network parameters $\bm \theta$ at regular intervals. This design choice aims to ensure training stability and optimize the training target, as suggested in the literature \cite{Mnih2015}.
\subsection{CTDE and IGM Principle}\label{CTED and IGM Condition}

During the training process, both the joint local action-observation history $\bm z$ and the global state $s$ can be utilized. However, when it comes to the execution of the learned policy, each individual agent solely relies on its local action-observation histories $z_i$, which is the core idea of CTDE. One challenge in the execution phase of the learned policy, in adherence to CTDE, is to establish the optimal consistency between the joint action value function, denoted by $Q_{tot}(\bm z, \bm a)$, in the training process and the individual agent action value function, denoted as $Q_i(z_i,a_i)$, which is further mathematically defined by the following IGM principle \cite{Rashid2020}.
\begin{definition}
	For the joint action value function $Q_{tot}(\bm z, \bm a)$, if the equality
	\begin{equation}\label{IGM}
		\mathop{\arg\max}\limits_{\bm a \in A^N}Q_{tot}(\bm z, \bm a)=
		\begin{pmatrix}
			{\arg\max}_{a_1 \in A}Q_{1}(z_1, a_1)\\
			\vdots\\
			{\arg\max}_{a_n \in A}Q_{n}(z_n, a_n)\\
		\end{pmatrix}
	\end{equation}
 holds, we call $Q_{tot}$ can be factorized by $[Q_i]_{i=1}^n$ with the satisfaction of IGM principle.
\end{definition}

Once the IGM principle holds, the factorization of the trained optimal state-action value function is straightforward. In order to satisfy the IGM principle, many methods have been proposed. For example, the VDN algorithm \cite{Sunehag2018} is a pioneering work in the field of value function factorization method. It represents the joint action value function as a sum of individual action value functions, that is,
\begin{equation}\label{VDN}
	\begin{aligned}
		Q_{tot}(\bm z, \bm a)=\sum_{i=0}^nQ_{i}(z_i, a_i).
	\end{aligned}
\end{equation}
The QMIX algorithm in \cite{Rashid2020} constructs the mapping relation $Q_{tot}(\bm z, \bm a)=f(Q_{1}(z_1, a_1),...,Q_{n}(z_n, a_n))$ between $Q_i$ and $Q_{tot}$ to make it satisfy the following monotonicity condition
\begin{equation}\label{QMIX}
	\begin{aligned}
		\frac{\partial Q_{tot}(\bm z, \bm a)}{\partial Q_{i}(z_i, a_i)}\geq0 \quad i \in N,
	\end{aligned}
\end{equation}
further relaxing the requirement of the IGM principle.
By factorizing action value function $Q_{tot}$ into state value function $V_{tot}$ and advantage function $A_{tot}$
\begin{equation}\label{QPLEX}
	\begin{aligned}
		Q_{tot}(\bm z, \bm a)=V_{tot}(\bm z)+A_{tot}(\bm z, \bm a),
	\end{aligned}
\end{equation}
DuPLEX dueling multi-agent Q-learning (QPLEX) in \cite{Wang2021} transforms the monotonicity constraint on $Q_{tot}$ to advantage function $A_{tot}$. 

However, these value function factorization methods only provide sufficient but not necessary conditions for IGM principle, and the monotonicity constraint limits the class of the joint action value function $Q_{tot}$. When applying them to the non-monotonic case, these methods cannot learn the correct joint action value function. Specifically, as the matrix game we show in Table \ref{tab1} (a), it represents the cooperation of two agents. Table \ref{tab1} (a) is the real value of $Q_{tot}$, which determines the specific rules for the game \cite{Hu2021}. Denote the action space of the two agents as $A=\{0,1,2\}$. When both agents adopt action $0$ at the same time, they will receive the maximum reward $1$. When only one of them adopts action $0$, the reward is $-12$. Otherwise, the reward is $0$.  Note that for agent $1$, when it takes action $0$, the individual action value function $Q_1(z_1, 0)=1\times\frac{1}{3}+(-12)\times\frac{1}{3}+(-12)\times\frac{1}{3}\approx -7.7$. Similarly, we have $Q_1(z_1, 1)=-4$ and $Q_1(z_1, 2)=-4$. For agent $2$, it can be obtained that $Q_2(z_2, 0)=-7.7$, $Q_2(z_2, 1)=-4$ and $Q_2(z_2, 2)=-4$, respectively. In this case, the actual joint action value function $Q_{tot}$ in Table \ref{tab1} (a) does not increase with the increase of $Q_i$. However, as shown in Table \ref{tab1} (b), when using QMIX learns this non-monotonic relationship, only the monotonic results are obtained, failing to learn the correct $Q_{tot}$ in Table \ref{tab1} (a).

\begin{table}
	\caption{Matrix game (red bold is optimal reward).}
	\label{tab1}
	\centering
	\subfloat[The real $Q_{tot}$.]{
		\label{tab1a}
		\begin{tabular} {|c|c|c|c|}
			\hline
			& & & \\[-8pt]
			$\bm a$ & 0 & 1 & 2\\
			\hline
			& & & \\[-8pt]
			0 & {\color{red}\textbf{1}} & -12& -12  \\
			\hline 
			& & & \\[-8pt]
			1 & -12 & 0 & 0 \\
			\hline
			& & & \\[-8pt]
			2 & -12 & 0 & 0 \\
			\hline 
		\end{tabular}
		
}
	\subfloat[QMIX: $Q_{tot}$.]{
		\label{tab1b}
		\centering
		\begin{tabular}{|c|c|c|c|}
			\hline
			& & & \\[-8pt]
			$\bm a$ & 0 & 1 & 2\\
			\hline
			& & & \\[-8pt]
			0 & -9.4 & -9.4& -9.4  \\
			\hline 
			& & & \\[-8pt]
			1 & -9.4 & 0 & 0 \\
			\hline
			& & & \\[-8pt]
			2 & -9.4 & 0 & {\color{red}\textbf{0}} \\
			\hline 
		\end{tabular}
		
	}
\end{table}

\section{Universal Value Function Factorization Method}
\label{Introducing my algorithm}

In this section, we propose the QFree algorithm, which serves as algorithmic framework for developing lossless joint action value functions to satisfy the IGM principle. Initially, we factorizing the action value function into state value function and advantage function using dueling network architecture. Subsequently, we transform the value function based IGM principle into a novel advantage function based IGM principle. Through designing a new mixing network architecture and learning algorithm, the advantage function based IGM principle is translated into a mathematical constraint that is incorporated into the optimization process using regularization.

\subsection{New Advantage Function Based IGM Principle}
\label{Advantage-based IGM Principles}

The action value function $Q$ depends on both the state $s$ and the action $a$ and it reflects theirs' joint impact. However, in certain situations, evaluating the quality of the current state $s$ becomes more significant than choosing the  action $a$. Conversely, in other scenarios, the influence of the action $a$ is the main concern. In this case, it is common to split the action value function $Q$ into two separate components: the state value function $V$ is evaluated only for state $s$ and the advantage function $A$ focuses on action $a$, satisfying $Q = V + A$ \cite{10.5555/3045390.3045601}. Naturally, the joint action value function $Q_{tot}$ and the individual action value function $Q_i$ of agents can also be written in this form as follows.
\begin{equation}\label{Advantage}
	\begin{aligned}
		Q_{tot}(\bm z, \bm a)=V_{tot}(\bm z)+A_{tot}(\bm z, \bm a),\\
		Q_{i}(z_i, \bm a_i)=V_{i}(z_i)+A_{i}(z_i, a_i).
	\end{aligned}
\end{equation}

In (\ref{Advantage}), simultaneously changing the state value function $V$ and the advantage function $A$ will lead to the summed action value function $Q$ unchanged, thereby making it hard to find the optimal policy.  To address this issue, a standard method is setting the advantage function as zero under the optimal action $a^*$ \cite{10.5555/3045390.3045601}. Then, we have
\begin{equation}\label{zero}
	\begin{aligned}
		V_{tot}(\bm z)=\mathop{\max}\limits_{\bm a \in A^N}Q_{tot}(\bm z, \bm a),\\
		V_{i}(z_i)=\mathop{\max}\limits_{a_i \in A}Q_{i}(z_i, a_i).
	\end{aligned}
\end{equation}

Since the state value function $V_{tot}(\bm z)$ and $V_{i}(z_i)$ are independent of the action selection, the state value function $V_{tot}(\bm z)$ and $V_{i}(z_i)$ does not need to satisfy the IGM principle. In this way, one can transform the IGM principle from the action value function into the advantage function based IGM principle as follows.
\begin{definition}\label{definition2}
	For a joint advantage function $A_{tot}$, if there exists an independent set of advantage functions $[A_i]_{i=1}^n$ satisfying:
	\begin{equation}\label{advantage-based IGM}
		\mathop{\arg\max}\limits_{\bm a \in A^N}A_{tot}(\bm z, \bm a)=
		\begin{pmatrix}
			{\arg\max}_{a_1 \in A}A_{1}(z_1, a_1)\\
			\vdots\\
			{\arg\max}_{a_n \in A}A_{n}(z_n, a_n)\\
		\end{pmatrix},
	\end{equation}
	then we claim that $A_{tot}$ can be factorized by $[A_i]_{i=1}^n$ satisfying the advantage function based IGM principle.
\end{definition}

The equivalence of the two principles are verified in \cite{Wang2021}. In the following, we propose an equivalent condition for the IGM principle in (\ref{advantage-based IGM}) in the following Theorem \ref{theorem1}.
 
\begin{theorem}\label{theorem1}
	For the advantage function based IGM principle in (\ref{advantage-based IGM}), let $a^*_i=\mathop{\arg\max}\limits_{a_i \in A}A_{i}(z_i, a_i)$ and $\bm a^*=[a^*_1,a^*_2,...,a^*_n]$. Then the principle (\ref{advantage-based IGM}) holds if and only if the following conditions are satisfied:
	\begin{equation}\label{constraint}
		\begin{aligned}
	\begin{cases}
		A_{tot}(\bm z, \bm a)\leq0 & \bm a\neq\bm a^*,\\
		A_{tot}(\bm z, \bm a)=0 & \bm a=\bm a^*.
	\end{cases}
\end{aligned}
	\end{equation}
\end{theorem}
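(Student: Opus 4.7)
The plan is to prove both directions of the equivalence, making essential use of the normalization introduced in (8), which fixes $V_{tot}(\bm z) = \max_{\bm a} Q_{tot}(\bm z, \bm a)$ and $V_i(z_i) = \max_{a_i} Q_i(z_i, a_i)$. This normalization yields two auxiliary facts that will drive both directions: first, $A_{tot}(\bm z, \bm a) = Q_{tot}(\bm z, \bm a) - V_{tot}(\bm z) \leq 0$ with equality precisely at the joint argmaxers of $Q_{tot}$ (equivalently of $A_{tot}$); second, $A_i(z_i, a_i) \leq 0$ with equality iff $a_i = a^*_i$. Both follow by direct substitution of (8) into (7).

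For the ``if'' direction, I would assume the two conditions in (\ref{constraint}) and observe that $A_{tot}(\bm z, \bm a^*) = 0 \geq A_{tot}(\bm z, \bm a)$ for every $\bm a$, so $\bm a^*$ is a joint maximizer of $A_{tot}(\bm z, \cdot)$. Since $\bm a^* = [\arg\max_{a_i} A_i(z_i, a_i)]_{i=1}^n$ by the definition of $a_i^*$ in the theorem, this matches the right-hand side of (\ref{advantage-based IGM}), establishing the advantage function based IGM principle.

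For the ``only if'' direction, I would assume (\ref{advantage-based IGM}), so that $\bm a^*$ is a joint maximizer of $A_{tot}(\bm z, \cdot)$. By the normalization above, $A_{tot}$ is non-positive everywhere and attains $0$ at every joint maximizer, which forces $A_{tot}(\bm z, \bm a^*) = 0$; and for any $\bm a \neq \bm a^*$ the bound $A_{tot}(\bm z, \bm a) \leq 0$ is likewise automatic from the normalization, giving both conditions in (\ref{constraint}).

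The main subtlety I expect to navigate is the potential non-uniqueness of argmaxes on either side of (\ref{advantage-based IGM}). I would handle this by reading $\arg\max$ as a set-valued operator: the IGM identity should be interpreted as saying that the product of the individual argmax sets is contained in the joint argmax set, which is perfectly compatible with the weak inequality ``$\leq 0$'' at points $\bm a \neq \bm a^*$ that the theorem allows. It is also worth being explicit that necessity genuinely depends on (\ref{zero}); without pinning $V_{tot}$ to $\max_{\bm a} Q_{tot}$, the value of $A_{tot}$ at $\bm a^*$ is defined only up to an additive function of $\bm z$, and the precise equality ``$=0$'' would lose its content.
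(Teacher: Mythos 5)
Your proof is correct and follows essentially the same route as the paper's: both directions hinge on the normalization in (\ref{zero}), which forces $A_{tot}=Q_{tot}-\max_{\bm a}Q_{tot}\leq 0$ with equality exactly at joint maximizers, and the two inclusions between the IGM class and the class satisfying (\ref{constraint}) are established just as in the appendix. If anything you are slightly more careful than the paper, which in its step (A7) asserts a strict inequality $\overline{A}_{tot}(\bm z,\overline{\bm a})<0$ for $\overline{\bm a}\neq\overline{\bm a}^*$ (implicitly assuming a unique maximizer), whereas your set-valued reading of $\arg\max$ and the weak bound $\leq 0$ handle ties correctly.
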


The complete proof of Theorem \ref{theorem1} is shown in Appendix \ref{Appendix A1}. Note that Theorem 1 not only provides an equivalent form of IGM principle, but also develops necessary and sufficient conditions on value function factorization. Through these conditions, we can realize the value factorization without any conservatism. In the subsequent subsection, we elaborate on how to satisfy these conditions in (\ref{constraint}) through regularization, and design detailed mixing network architecture to satisfy them.

\subsection{The Algorithm Architecture}
\label{The Algorithm Architectures}

The implementation procedure of the entire algorithm will be thoroughly discussed in this subsection. First, we will provide a detailed description of the precise architecture of the mixing network designed for the proposed algorithm. The primary function of this mixing network is to convert the independent action value function into a joint action value function. Once training is completed, during the specific execution phase, we will eliminate the components of the mixing network and decentralize execution by utilizing their respective independent action value functions. Subsequently, we will introduce and analyze how to effectively employ the proposed value function factorization method, thus achieving Theorem \ref{theorem1} without additional requirements on the monotonicity of the value function, along with completely presenting our learning algorithm.

\subsubsection{Mixing Network Architecture}
\label{Mixing Network Architectures}

The input to the mixing network comprises the observation and action information of each agent, while the output is the joint action value function required for MARL. In what follows, we will detail the structure of the network.

\begin{figure*}[!t]
	\centering
	\includegraphics[width=0.8\linewidth]{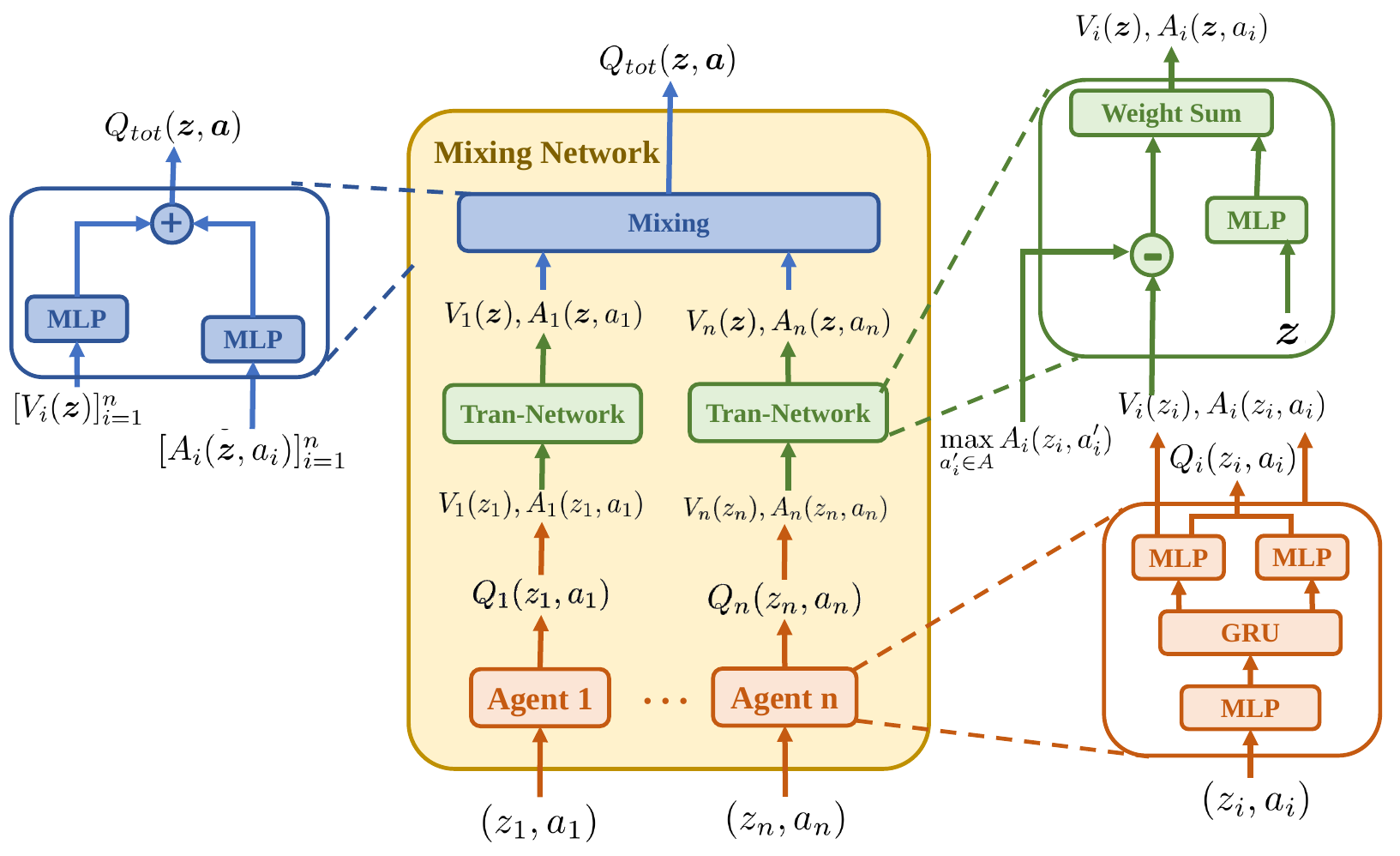}
	\caption{Mixing network architecture.}
	\label{mixing network architectures}
\end{figure*}

Due to the partial observability of the environment, the agent is unable to acquire complete state information. To mitigate its impact on the reinforcement learning algorithm, we utilize the historical observation sequence $z_i$ for each agent as a substitute for state $s_i$. By employing recurrent neural networks (RNN) with hidden states $h_i$, we effectively exploit these historical observation sequences for learning purposes. Consequently, we replace the fully connected feedforward neural network in the Q-network with an RNN architecture. Subsequently, behind this RNN framework, two fully connected neural networks are designed to respectively output the state value function $V_{i}(z_i)$ and advantage function $A_{i}(z_i,a_i)$. The action value function $Q_{i}(z_i,a_i)$ is then obtained by combining $V_{i}(z_i)$ and $A_{i}(z_i,a_i)$ through equation (\ref{Advantage}). To ensure that $A_{i}(z_i, a_i^*)=0$ as required in (\ref{zero}), we set
\begin{equation}\label{trans}
	\begin{aligned}
		A_{i}(z_i, a_i)\leftarrow A_{i}(z_i, a_i)-\mathop{\max}\limits_{a'_i \in A}A_{i}(z_i, a'_i).
	\end{aligned}
\end{equation}

Then, we use a transformation network \cite{Wang2021} to transform the state value function $V_{i}(z_i)$ and the advantage function $A_{i}(z_i, a_i)$ of each agent into $V_{i}(\bm z)$ and $A_{i}(\bm z, a_i)$. The specific expression is as follows.
\begin{equation}\label{trans_1}
	\begin{aligned}
		V_{i}(\bm z)=\omega_i(\bm z)V_{i}(z_i)+b_i(\bm z),\\
		A_{i}(\bm z, a_i)=\omega_i(\bm z)A_{i}(z_i, a_i).
	\end{aligned}
\end{equation}
Here, $\omega_i(\bm z)$ and $b_i(\bm z)$ represent the constants generated by the feedforward neural network, with $z$ being the input.

Next, we change $V_{i}(\bm z)$ and $A_{i}(\bm z, a_i)$ into joint state value function $V_{tot}(\bm z)$ and joint advantage function $A_{tot}(\bm z, \bm a)$ through two feedforward mixing networks, and the expression is
\begin{equation}\label{trans_2}
	\begin{aligned}
		V_{tot}(\bm z)=f_v([V_{i}(\bm z)]_{i=1}^n,\theta_v),\\
		A_{tot}(\bm z, \bm a)=f_a([A_{i}(\bm z, a_i)]_{i=1}^n,\theta_a).
	\end{aligned}
\end{equation}
Here, $f_v(.)$ and $f_a(.)$ denote the mappings of two feedforward neural networks, with the input $[V_{i}(\bm z)]_{i=1}^n=[V_{1}(\bm z),V_{2}(\bm z),...,V_{n}(\bm z)]$, $[A_{i}(\bm z, a_i)]_{i=1}^n=[A_{1}(\bm z, a_i),A_{2}(\bm z, a_i),...,A_{n}(\bm z, a_i)]$. The parameters of these networks are denoted by $\theta_v$ and $\theta_a$, respectively.

Finally, the joint action value function $Q_{tot}(\bm z, \bm a)$ is obtained by combining  $V_{tot}(\bm z)$ and $A_{tot}(\bm z, \bm a)$ according to (\ref{Advantage}). The complete architecture of the mixing network for our approach is illustrated in Fig. \ref{mixing network architectures}, providing a detailed process to get the joint action value function $Q_{tot}(\bm z, \bm a)$ from the input $[(z_i, a_i)]_{i=1}^n$.

\subsubsection{Policy Evaluation Algorithm}
\label{Policy Evaluation Algorithm}

In this subsection, we propose a novel MARL algorithm and provide a detailed description of its specific process. The entire process of reinforcement learning algorithm can be regarded as the solution of a Bellman optimal equation, and the solution process is a Bellman optimal operator. It involves two components: policy evaluation and policy improvement. To enhance the agent's exploration during the learning process, we adopt the $\epsilon$-greedy policy of (\ref{optimal policy}) for policy improvement
\begin{equation}\label{y_optimal policy}
	\pi^*(a_t|s_t)=
	\begin{aligned}
		\begin{cases}
			1-\epsilon & a_t=\mathop{\arg\max}\limits_{a \in A}Q^*_\pi(s_t,a),\\
			\epsilon & otherwise.
		\end{cases}
	\end{aligned}
\end{equation}
Here, $\epsilon \in [0,1)$ and decreases with the number of training episodes. According to (\ref{loss function}), the policy evaluation in MARL evaluates the joint policy by the following loss function
\begin{equation}\label{loss function_j}
	\begin{aligned}
		\mathcal{L}_{td}(\bm \theta)=(r+\gamma\mathop{\max}\limits_{a' \in A(s')}Q_{tot}(\bm z, \bm a;\bm \theta^-)-Q_{tot}(\bm z, \bm a;\bm \theta))^2,
	\end{aligned}
\end{equation}
where $\bm \theta$ is the whole mixing network parameters, including each agent Q-network, tran-network and mixing network. $\bm \theta^-$ is the target network parameter, which has the same structure as the mixing network. The policy evaluation process can be regarded as a parameter optimization problem as follows:
\begin{equation}\label{loss function_o}
	\begin{aligned}
		&\mathop{\arg\min}\limits_{\bm \theta}\mathcal{L}_{td}(\bm \theta)=\\
		&\mathop{\arg\min}\limits_{\bm \theta}(r+\gamma\mathop{\max}\limits_{a' \in A(s')}Q_{tot}(\bm z, \bm a;\bm \theta^-)-Q_{tot}(\bm z, \bm a;\bm \theta))^2.
	\end{aligned}
\end{equation}
According to Theorem \ref{theorem1}, the optimization problem (\ref{loss function_o}) can be transformed into a constrained optimization problem:
\begin{equation}\label{loss function_c}
	\begin{aligned}
		\bm \theta^*=&\mathop{\arg\min}\limits_{\bm \theta}(r+\gamma\mathop{\max}\limits_{a' \in A(s')}Q_{tot}(\bm z, \bm a;\bm \theta^-)-Q_{tot}(\bm z, \bm a;\bm \theta))^2\\
		s.t.\quad& A_{tot}(\bm z, \bm a;\bm \theta)\leq0 \quad  if\ \bm a\neq\bm a^*,\\
		& A_{tot}(\bm z, \bm a^*;\bm \theta)=0.
	\end{aligned}
\end{equation}
Here, $\bm a^*=[a_1^{max}(z_1';\bm \theta),a_2^{max}(z_2';\bm \theta),...,a_n^{max}(z_n';\bm \theta)]$ is obtained by optimizing the following individual Q-network
\begin{equation}\label{a_max}
	\begin{aligned}
		a_i^{max}(z_i';\bm \theta)=\mathop{\arg\max}\limits_{a_i \in A^N}Q_{i}(z_i', a_i;\bm \theta).
	\end{aligned}
\end{equation}
In order to satisfy the constraints in equation (\ref{loss function_c}), we use the regularization items $(A_{tot}(\bm z',\bm a^*;\bm \theta))^2$ and $(min[A_{tot}(\bm z',\bm a;\bm \theta),0])^2$ to modify the loss function $\mathcal{L}_{td}(\bm \theta)$. The regularization term $(A_{tot}(\bm z',\bm a^*;\bm \theta))^2$ is used to cope with the equality constraint $A_{tot}(\bm z, \bm a^*;\bm \theta)=0$. When $A_{tot}(\bm z, \bm a^*;\bm \theta)\neq0$, the regularization term will give a nonzero penalty value. The regularization term $(min[A_{tot}(\bm z',\bm a;\bm \theta),0])^2$ is used to handle the inequality constraint $A_{tot}(\bm z, \bm a;\bm \theta)\leq0 \quad  if\ \bm a\neq\bm a^*$. When $A_{tot}(\bm z, \bm a;\bm \theta)>0$, the regularization term provides a nonzero penalty term. Through the above two regularization terms, the loss function of policy evaluation will be changed from $\mathcal{L}_{td}(\bm \theta)$ in (\ref{loss function_j}) to
\begin{equation}\label{loss function_t}
	\begin{aligned}
		\mathcal{L}(\bm \theta,\bm v_1,\bm v_2)&=\mathcal{L}_{td}(\bm \theta)+\bm v_1 (A_{tot}(\bm z',\bm a^*;\bm \theta))^2\\
		&+\bm v_2 (min[A_{tot}(\bm z',\bm a;\bm \theta),0])^2,
	\end{aligned}
\end{equation}
where $\bm v_1>0$ and $\bm v_2>0$ are the regularization coefficients. The detailed policy evaluation optimization algorithm is shown in Algorithm \ref{alg1}.
\begin{algorithm}[ht]
	\caption{Policy Evaluation Algorithm}
	\begin{algorithmic}
		\REQUIRE Initial training repaly buffer  $\mathcal{D}$; mixing Q-network and random parameter $\bm \theta$, target mixing Q-network parameter $\bm \theta^-=\bm \theta$, $\epsilon$-greedy policy parameter $\epsilon$, dual variable $\bm v_1$ and $\bm v_2$, initial optimal step size of DNN $\varepsilon$.
		\FOR {each sequence $e=1 \rightarrow E$}
		\STATE Get the initial observation $\bm z_1$ of the environment;
		\FOR {each step $t=1 \rightarrow T$}
		\STATE Take action $a_i^t$ based on the current individual Q-network $Q_i^\theta(z_i^t,a_i^t)$ with $\epsilon$-greedy policy;
		\STATE Get the current joint action $\bm a_{tot}^t=\{a_1^t,a_2^t,...,a_n^t\}$;
		\STATE Perform the current joint action $\bm a_{tot}^t$, get the current reward $r_t$ and get the next moment state observation $\bm z_{t+1}$;
		\STATE Put tuple $\{\bm z_{t},\bm a_{tot}^t,r_t,\bm z_{t+1}\}$ into training replay buffer $\mathcal{D}$;
		\STATE If there is enough data in $\mathcal{D}$, sample minibatch of data $\{\bm z,\bm a_{tot},r,\bm z'\}$ from $\mathcal{D}$;
		\STATE Define $a_i^{max}(z_i';\bm \theta)=\mathop{\arg\max}\limits_{a_i \in A^N}Q_{i}(z_i', a_i;\bm \theta)$;
		\STATE Then obtain the optimal action $\bm a_{max}(\bm z';\bm \theta)=[a_1^{max}(z_1';\bm \theta),a_2^{max}(z_2';\bm \theta),...,a_n^{max}(z_n';\bm \theta)]$;
		\STATE For each of the data of minibatch, use the target mixing Q-network to compute $y=r+\gamma\mathop{\max}\limits_{a' \in A(s')}Q_{tot}(\bm z',\bm a_{max}(\bm z';\bm \theta);\bm \theta^-)$ with the double DQN algorithm;
		\STATE Calculate the TD error loss function $\mathcal{L}_{td}(\bm \theta)=(y-Q_{tot}(\bm z,\bm a;\bm \theta))^2$;
		\STATE Get the regularization terms $A_{tot}(\bm z',\bm a_{max}(\bm z';\bm \theta);\bm \theta)$ and $A_{tot}(\bm z',\bm a;\bm \theta)$;
		\STATE Calculate the global Loss function $\mathcal{L}(\bm \theta,\bm v_1,\bm v_2)=\mathcal{L}_{td}(\bm \theta)+\bm v_1 (A_{tot}(\bm z',\bm a_{max}(\bm z';\bm \theta);\bm \theta))^2+\bm v_2 (min[A_{tot}(\bm z',\bm a;\bm \theta),0])^2$;
		\STATE Calculate the average loss functions of $m$ Samples $\frac{1}{m}\sum_{k=0}^m\mathcal{L}(\bm \theta,\bm v_1,\bm v_2)$;
		\STATE Calculate the gradient of the average loss functions $\nabla_{\bm {\theta}}\frac{1}{m}\sum_{k=0}^m\mathcal{L}(\bm \theta,\bm v_1,\bm v_2)$ for $\bm \theta$;
		\STATE Update parameters $\bm{\theta}\gets\bm \theta-\varepsilon \nabla_{\bm {\theta}}\frac{1}{m}\sum_{k=0}^m\mathcal{L}(\bm \theta,\bm v_1,\bm v_2)$;
		\STATE Update target parameters $\bm \theta^-=\bm \theta$ every $T^-$ time step.
		\ENDFOR
		\ENDFOR
	\end{algorithmic}
	\label{alg1}
\end{algorithm}

\section{Experiments}
\label{Experiments}

In this section, we simulate the nonmonotonic matrix game environment following the approach in Table \ref{tab1}. This simulation aims to demonstrate the effectiveness of the proposed  QFree algorithm in learning non-monotonic joint state-value functions and to show its capability in learning the complete IGM principle. To further assess the adaptability of the QFree algorithm in complex environments, we apply it to the SMAC environment and compare its performance with several classical and advanced MARL algorithms, namely IQL \cite{Tampuu_2017}, VDN \cite{Sunehag2018}, QMIX \cite{Rashid2020}, QTRAN \cite{Son2019}, and QPLEX \cite{Wang2021}. Unlike the matrix game environment, the SMAC environment serves as a classical MARL algorithm validation platform with higher dimensions in state and action spaces, as well as more complex rules. This choice allows for a more comprehensive evaluation of the algorithm's performance and robustness.

\subsection{Matrix Games}
\label{Matrix Games}

In this subsection, we employ a matrix game scenario to validate the effectiveness of the proposed QFree algorithm in non-monotonic cooperative MARL environment. This scenario resembles the one illustrated in Tables \ref{tab1}, where two agents simultaneously choosing action $0$ can attain a maximum reward of $1$. In this environment, we test several classical MARL algorithms based value function factorization for comparison with the QFree algorithm.

\begin{table}
	\caption{Matrix game of different algorithms (red bold is optimal reward).}
	\label{tab2}
	\centering
	\subfloat[IQL: $Q_{tot}$.]{
		\centering
		\begin{tabular}{|c|c|c|c|}
			\hline
			& & & \\[-8pt]
			$\bm a$ & 0 & 1 & 2\\
			\hline
			& & & \\[-8pt]
			0 & -12.0 & -6.1 & -6.0  \\
			\hline 
			& & & \\[-8pt]
			1 & -6.1 & -0.2 & -0.2 \\
			\hline
			& & & \\[-8pt]
			2 & -6.0 & -0.2 &  {\color{red}\textbf{-0.1}} \\
			\hline
		\end{tabular}
		\label{tab2a}
	}
	\subfloat[VDN: $Q_{tot}$.]{
		\centering
		\begin{tabular}{|c|c|c|c|}
			\hline
			& & & \\[-8pt]
			$\bm a$ & 0 & 1 & 2\\
			\hline
			& & & \\[-8pt]
			0 & -23.8  & -11.9 & -11.9  \\
			\hline 
			& & & \\[-8pt]
			1 & -11.8 & 0 & 0 \\
			\hline
			& & & \\[-8pt]
			2 & -11.8 & 0 & {\color{red}\textbf{0}} \\
			\hline
		\end{tabular}
		\label{tab2b}}
		
	\subfloat[QMIX: $Q_{tot}$.]{
		\centering
		\begin{tabular}{|c|c|c|c|}
			\hline
			& & & \\[-8pt]
			$\bm a$ & 0 & 1 & 2\\
			\hline
			& & & \\[-8pt]
			0 & -9.4~ & -9.4& -9.4  \\
			\hline 
			& & & \\[-8pt]
			1 & -9.4~ & 0.0 & 0.0 \\
			\hline
			& & & \\[-8pt]
			2 & -9.4~ & 0.0 & {\color{red}\textbf{0.0}} \\
			\hline
		\end{tabular}
		\label{tab2c}}
	\subfloat[QTRAN: $Q_{tot}$.]{
		\centering
		\begin{tabular}{|c|c|c|c|}
			\hline
			& & & \\[-8pt]
			$\bm a$ & 0 & 1 & 2\\
			\hline
			& & & \\[-8pt]
			0 & {\color{red}\textbf{1.0}} & -12.0 & -12.0  \\
			\hline 
			& & & \\[-8pt]
			1 & -12.0 & 0.0 & 0.0 \\
			\hline
			& & & \\[-8pt]
			2 & -12.0 & 0.0 & 0.0 \\
			\hline
		\end{tabular}
		\label{tab2d}}
		
	\subfloat[QPLEX: $Q_{tot}$.]{
		\centering
		\begin{tabular}{|c|c|c|c|}
			\hline
			& & & \\[-8pt]
			$\bm a$ & 0 & 1 & 2\\
			\hline
			& & & \\[-8pt]
			0 & -0.2 & -11 & -11 \\
			\hline 
			& & & \\[-8pt]
			1 & -11.0 & {\color{red}\textbf{-0.2}} & -0.2 \\
			\hline
			& & & \\[-8pt]
			2 & -11.4 & -0.2 & -0.2 \\
			\hline
		\end{tabular}
		\label{tab2e}}
	\subfloat[QFree: $Q_{tot}$.]{
		\centering
		\begin{tabular}{|c|c|c|c|}
			\hline
			& & & \\[-8pt]
			$\bm a$ & 0 & 1 & 2\\
			\hline
			& & & \\[-8pt]
			0 & {\color{red}\textbf{1.0}} & -12.0 & -12.0  \\
			\hline 
			& & & \\[-8pt]
			1 & -12.0 & 0.0 & 0.0 \\
			\hline
			& & & \\[-8pt] 
			2 & -12.0 & 0.0 & 0.0 \\
			\hline
		\end{tabular}
		\label{tab2f}}
\end{table}

\begin{figure}[!t]
	\centering
	\includegraphics[width=3in]{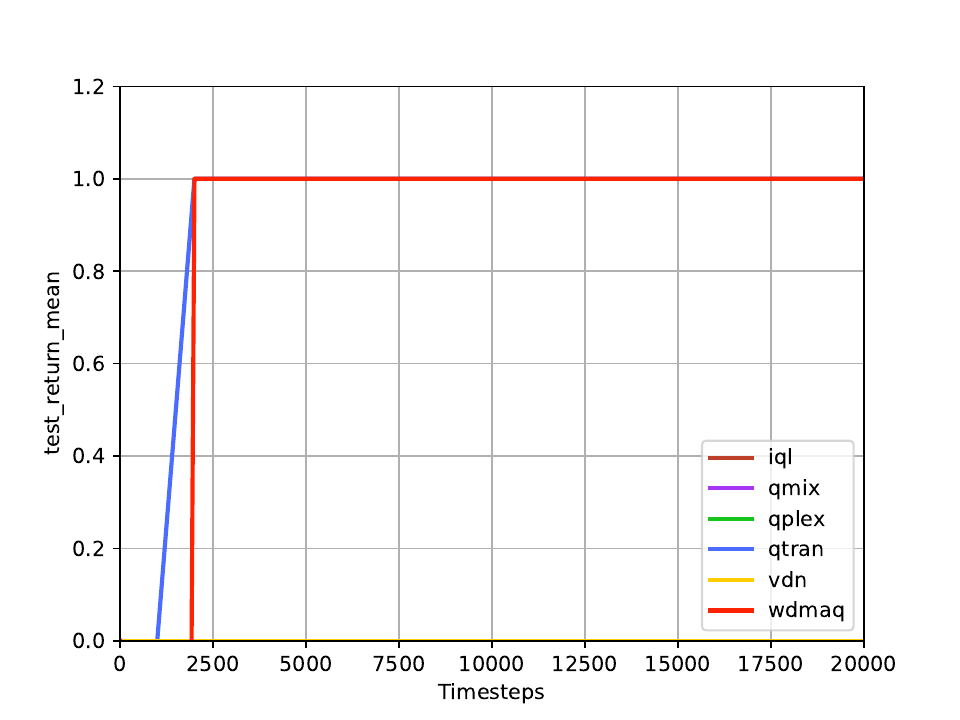}
	\caption{Mean return of matrix game  for different algorithms.}
	\label{Matrix game}
\end{figure}

The experimental results are presented in Table \ref{tab2}, which show that apart from the proposed QFree algorithm and QTRAN, none of the other algorithms are able to learn the optimal policy. IQL trains each agent independently, transforming MARL into single-agent reinforcement learning, which is unable to learn the optimal joint action value function. VDN and QMIX introduce monotonic constraints to satisfy the IGM principle, but they fail to learn the optimal policy in a non-monotonic environment. The QPLEX algorithm transforms monotonic constraints to the advantage function but does not completely eliminate the constraints. So it is still incapable of learning the optimal policy. To further confirm the aforementioned analysis, the relationships between the average test return and the number of training steps are provided in Fig.\ref{Matrix game}. It can be observed that only QTRAN and the QFree algorithm achieve the optimal return of $1$. This demonstrates that the QFree algorithm can learn the optimal policy in non-monotonic environments.

To further validate the effectiveness of the algorithm, we expand their action space in order to reflect the training effect. In this setting, we have two agents, and each agent has $21$-dimensional action space $A_i=\{0,1,...,19,20\}$. The entire matrix game environment reward function is designed as follows\cite{Son2019}:
\begin{equation}\label{matrix}
 \begin{aligned}
  f_1(a_1, a_2)=5-(\frac{15-a_1}{3})^2-(\frac{5-a_2}{3})^2\\
  f_2(a_1, a_2)=10-(\frac{5-a_1}{1})^2-(\frac{15-a_2}{1})^2\\
  R(a_1, a_2)=\max(f_1(a_1, a_2), f_2(a_1, a_2)).
 \end{aligned}
\end{equation}
From (\ref{matrix}), we can get the relationship between its reward function and the two agents' actions $a_1$ and $a_2$, as shown in Fig. \ref{Matrix game1}.

\begin{figure}[!t]
 \centering
 \includegraphics[width=3.5in]{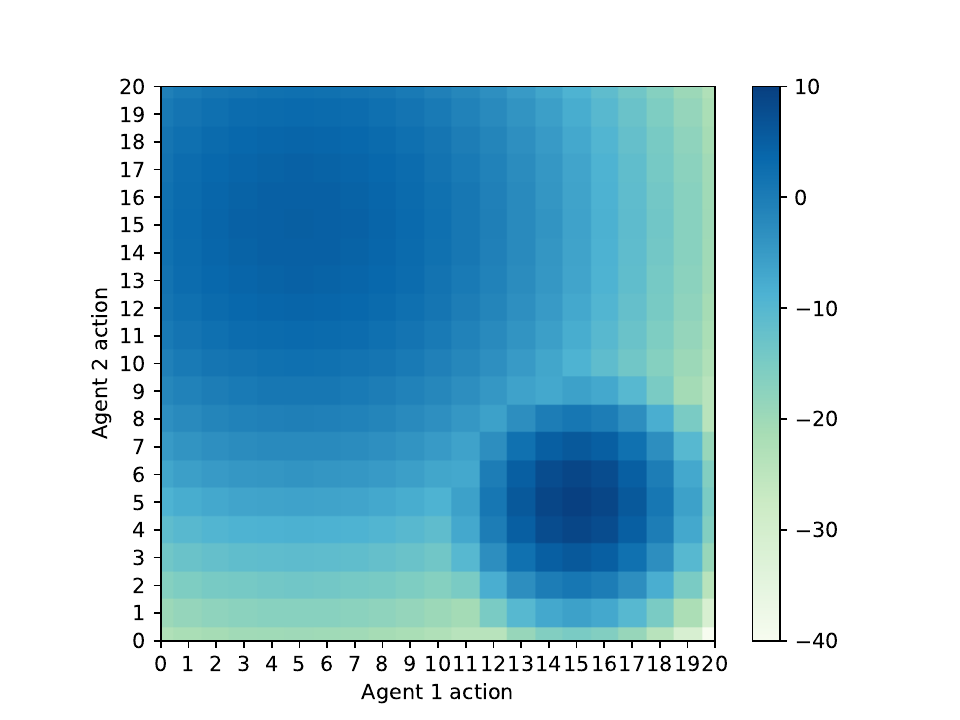}
 \caption{Matrix game reward function distribution.}
 \label{Matrix game1}
\end{figure}

\begin{figure}[!t]
	\centering
	\subfloat[step 0.]{
		\includegraphics[width=0.5\linewidth]{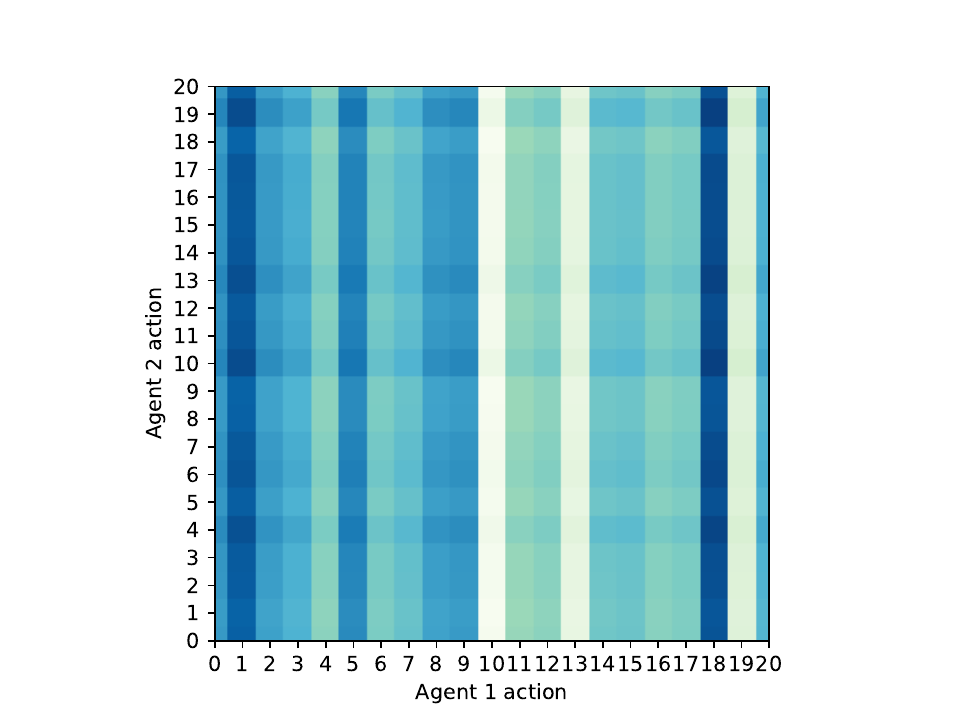}	
	\label{0}}
	\subfloat[step 500.]{
	\includegraphics[width=0.5\linewidth]{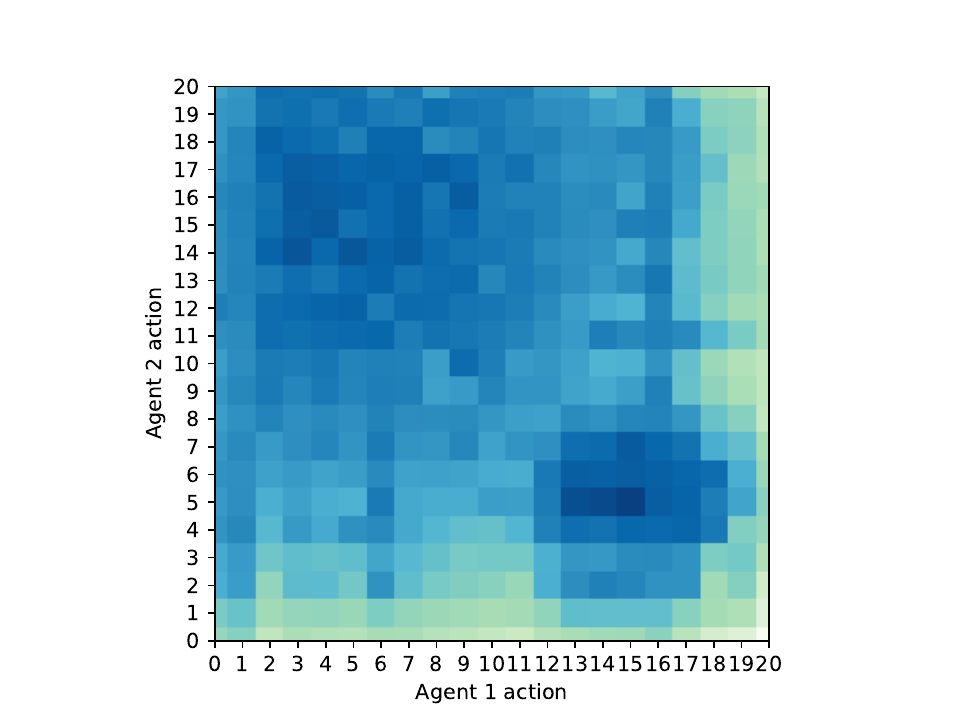}	
	\label{500}}
	
	\subfloat[step 1000.]{
	\includegraphics[width=0.5\linewidth]{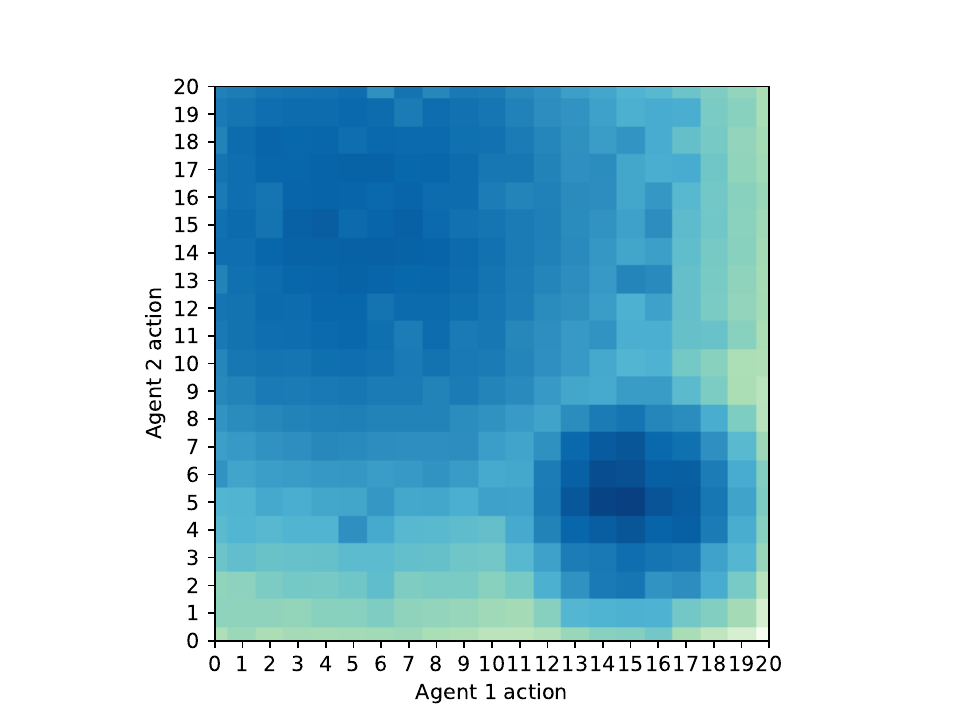}	
	\label{1000}}
	\subfloat[step 2000.]{
	\includegraphics[width=0.5\linewidth]{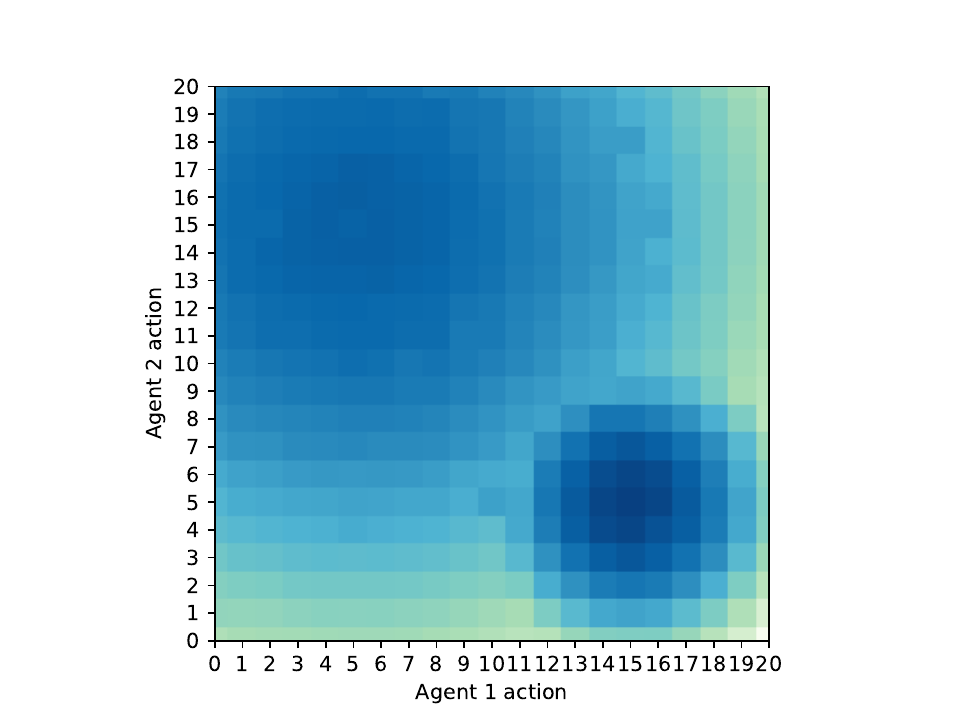}	
	\label{2000}}
	\caption{Matrix game training process.}
	\label{Matrix game2}
\end{figure}

In Fig. \ref{Matrix game1}, it can be seen that when the joint action $a_{tot}=(a_1,a_2)$ is taken as $a_{tot}=(5,15)$, the whole reward is taken to the maximum. Additionally, there also exists a locally optimal solution $a_{tot}=(15,5)$. As a result, the relationship between the global action value function and the individual action value function is non-monotonic.

The training process of the proposed algorithm is presented in Fig. \ref{Matrix game2}. We can see that after 2000 steps of learning, the algorithm has successfully learned the reward function matrix. The results clearly demonstrate the ability of the proposed algorithm to handle more complex MARL problems. 

\subsection{SMAC}
\label{SMAC}

\begin{table*}[!t]
	\begin{center}
		\caption{Winning rates of different algorithms on the SMAC map.}\label{tab4}
		\begin{tabular}{lccccccc} 
			\toprule[0.5mm]
			\textbf{Maps} & \textbf{Difficulty} & \textbf{IQL} & \textbf{VDN} & \textbf{QMIX}  & \textbf{QTRAN} & \textbf{QPLEX} & {\color{red} \textbf{QFree}}\\
			\midrule[0.3mm]
			2s3z & Easy & \textbf{\color{blue} 72.2\%} & 95.9\% & 96.8\% & 90.2\%  & 98.8\% & \textbf{\color{red} 99.5\%}\\
			2s\_vs\_1sc & Easy & \textbf{\color{blue} 96.9}\% & 99.4\% & 99.2\% & 99.2\%  & 98.9\% & \textbf{\color{red}100\%}\\
			3s\_vs\_4z & Easy & 84.5\% & 97.2\% & 98.4\% & \textbf{\color{blue} 20.9\%}  & 99.1\% & \textbf{\color{red}99.4\%}\\
			MMM & Easy & 88.1\% & 97.8\% & 98.4\% & \textbf{\color{blue}85.8\%}  & 99.5\% & \textbf{\color{red}100\%}\\
			so\_many\_baneling & Easy & \textbf{\color{blue} 56.3\%} & 95.2\% & \textbf{\color{red}98.0\%} & 92.8\%  & 96.3\% &  95.0\%\\
			\bottomrule[0.3mm]
			5m\_vs\_6m & Hard & \textbf{\color{blue} 44.2\%} & 67.3\% & 62.8\% & 55.8\%  & 73.4\% & \textbf{\color{red} 75.9\%}\\
			8m\_vs\_9m & Hard & \textbf{\color{blue} 30.5\%} & 88.4\% &  \textbf{\color{red}92.2\%}  & 65.3 \% &74.1\% & 85.0\%\\
			10m\_vs\_11m & Hard & \textbf{\color{blue} 29.1\%} & 90.6\% & \textbf{\color{red} 96.1\%} & 70.8\%  & 80.6\% & 87.5\%\\
			1c3s5z & Hard & \textbf{\color{blue} 14.8\%} & 88.9\% & 93.8\% & 48.1\%  &  95.0\% & \textbf{\color{red}97.2\%}\\
			3s\_vs\_5z & Hard & 39.1\% & 88.0\% & 84.4\% & \textbf{\color{blue} 3.6\%}  &  93.4\% & \textbf{\color{red} 95.3\%}\\
			\midrule[0.3mm]
			3s5z & S-Hard & \textbf{\color{blue} 8.1\%} & 67.7\% & 94.7\% & 14.7\%  & \textbf{\color{red}95.8}\% & 94.7\%\\
			25m & S-Hard & \textbf{\color{blue} 11.9\%} & 92.5\% & 98.1\% & 62.2\%  & 92.2\% &  \textbf{\color{red}99.4\%}\\
			27m\_vs\_30m & S-Hard & \textbf{\color{blue} 0.0\%} & 8.1\% &  26.9\% & 0.9\%  & 18.1\% & \textbf{\color{red}31.3\%}\\
			3s5z\_vs\_3s6z &S-Hard & \textbf{\color{blue} 0.0\%} & 0.3\% & 0.9\% & \textbf{\color{blue} 0.0\%}  &  7.8\% & \textbf{\color{red}13.5\%}\\
			MMM2 & S-Hard & \textbf{\color{blue} 0.3\%} & 12.7\% &  70.2\% &  0.8\% & 17.3\% & \textbf{\color{red}77.8}\%\\
			6h\_vs\_8z & S-Hard & \textbf{\color{blue} 0.0\%} &  4.7\% & 2.0\% & 0.6\%  & 3.8\% & \textbf{\color{red}5.3\%}\\
			2c\_vs\_64zg & S-Hard & \textbf{\color{blue} 27.2\%} & 31.4\% & 52.9\% & 42.2\%  & 34.5\% & \textbf{\color{red} 66.3\%}\\
			corridor & S-Hard & 0.2\% & 1.5\% & \textbf{\color{blue} 0.0\%} & 0.3\%  &  2.0\% & \textbf{\color{red}38.8}\%\\
			\bottomrule[0.5mm]
		\end{tabular}
	\end{center}
\end{table*}

\begin{figure*}[!t]
	\centering
	\subfloat[1c3s5z.]{
		\includegraphics[width=0.32\linewidth]{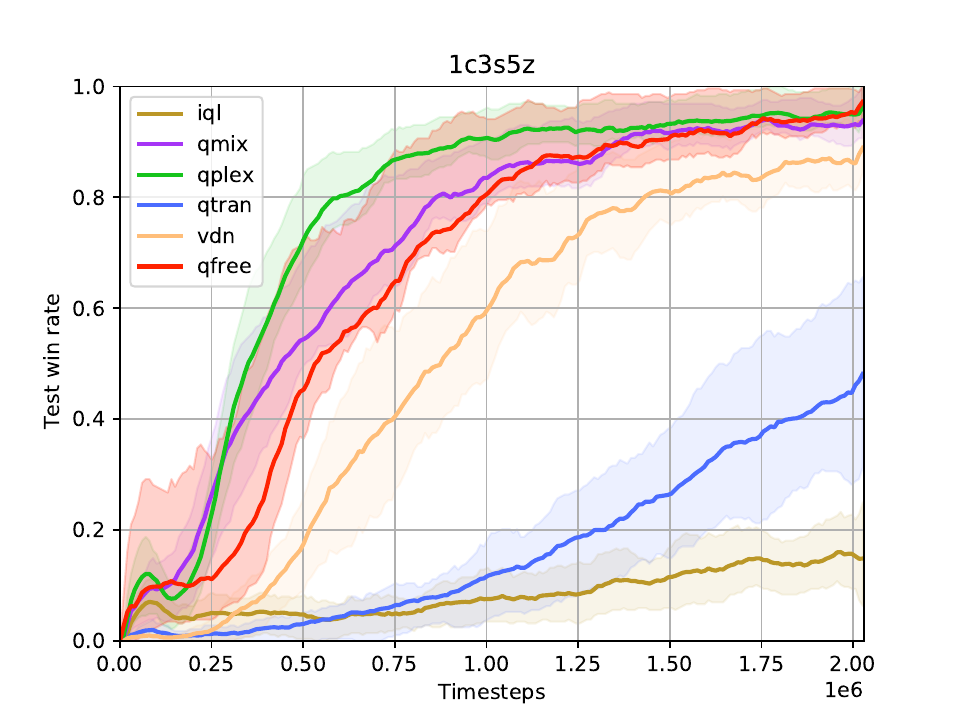}
		\label{1c3s5z}}
	\hfill 
	\subfloat[3s5z.]{
		\includegraphics[width=0.315\linewidth]{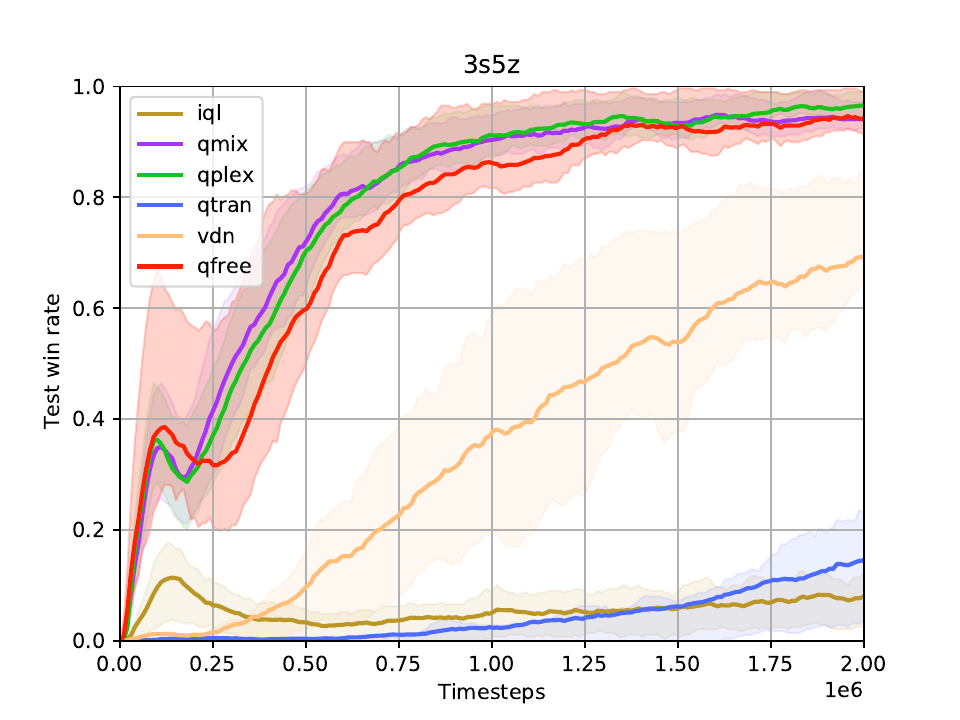}	
	\label{3s5z}}
	\hfill 
	\subfloat[3s\_vs\_5z.]{
		\includegraphics[width=0.315\linewidth]{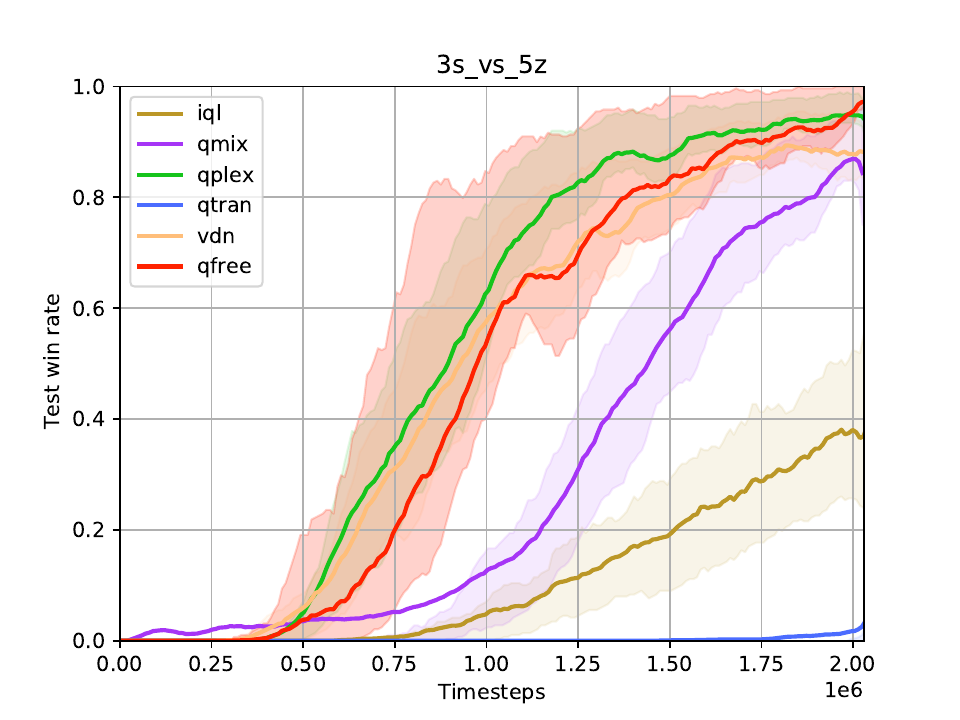}	
	\label{3svs5z}}
	\hfill 
	\subfloat[5m\_vs\_6m.]{
		\includegraphics[width=0.315\linewidth]{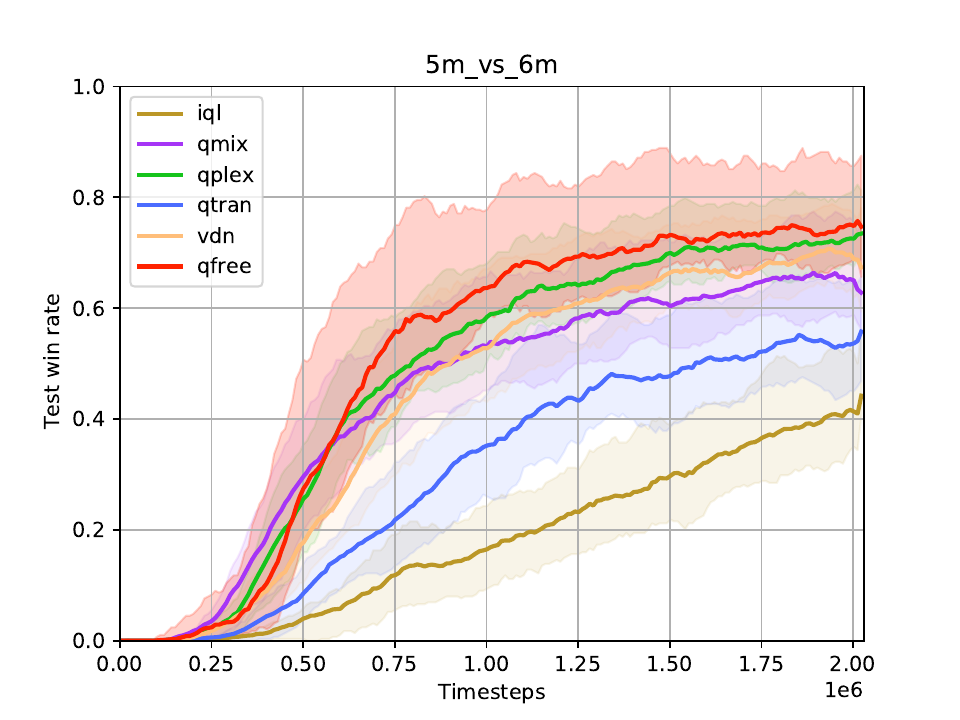}	
	\label{5mvs6m}}
	\hfill
	\subfloat[2c\_vs\_64zg.]{
		\includegraphics[width=0.315\linewidth]{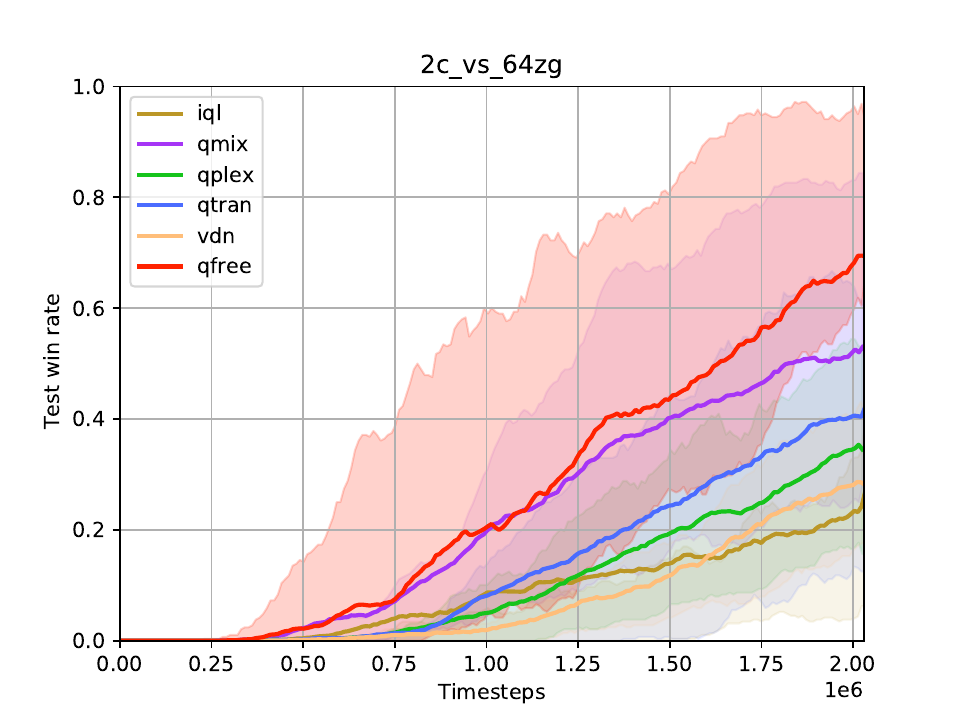}	
	\label{2cvs64zg}}
	\hfill
	\subfloat[MMM2]{
		\includegraphics[width=0.315\linewidth]{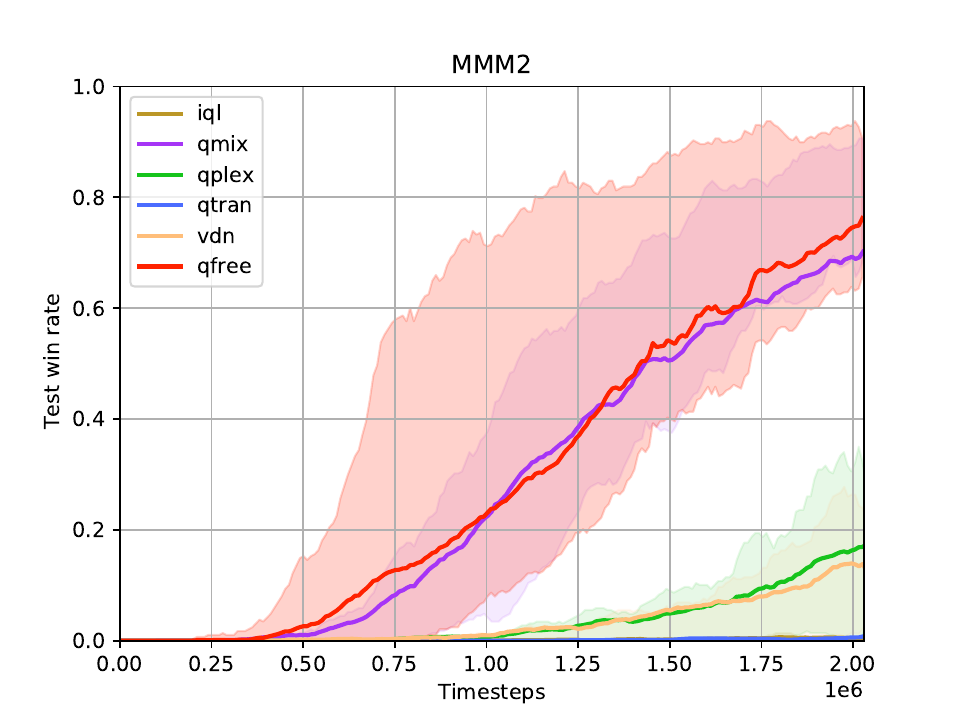}	
	\label{MMM2}}
	\hfill 
	\subfloat[3s5z\_vs\_3s6z.]{
		\includegraphics[width=0.315\linewidth]{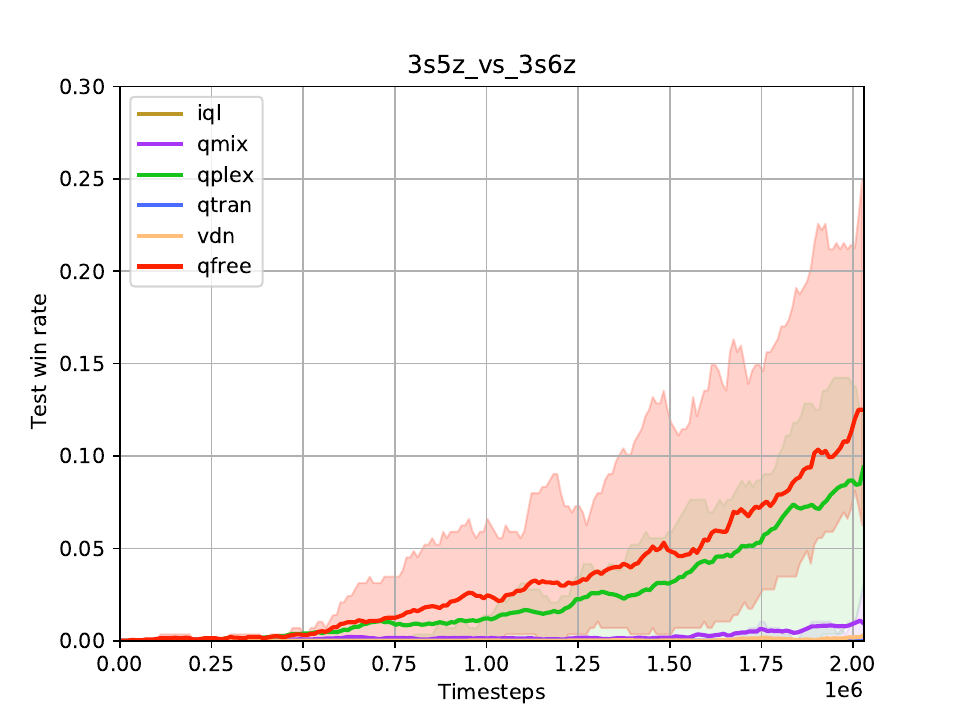}	
	\label{3s5z_vs_3s6z}}
		\hfill 
	\subfloat[Corridor.]{
		\includegraphics[width=0.315\linewidth]{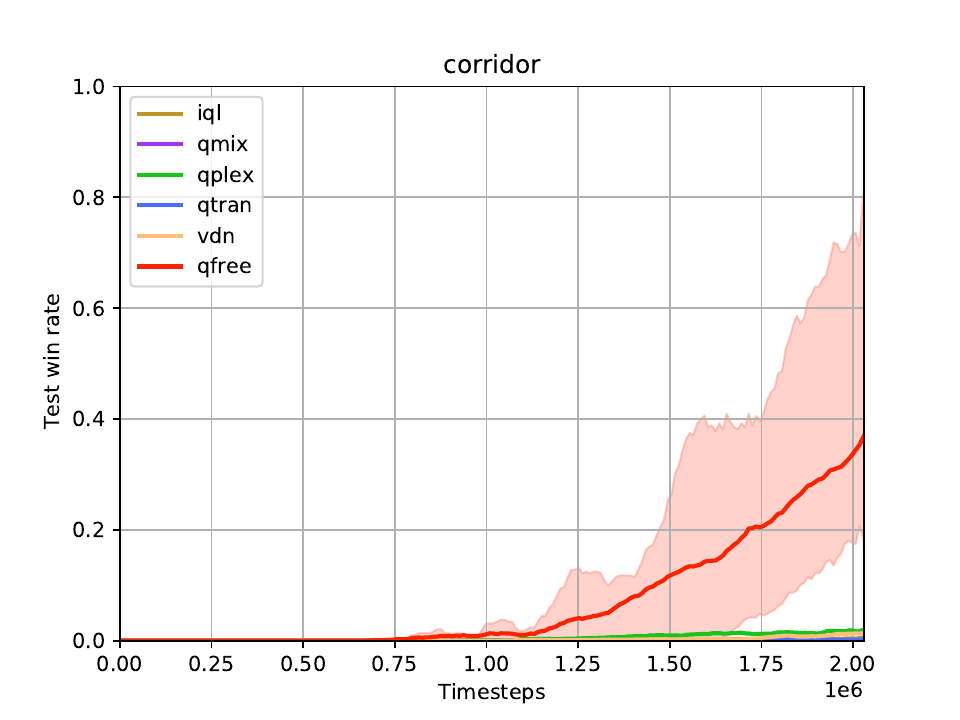}	
		\label{corridor}}
	\hfill 
	\subfloat[6h\_vs\_8z.]{
		\includegraphics[width=0.315\linewidth]{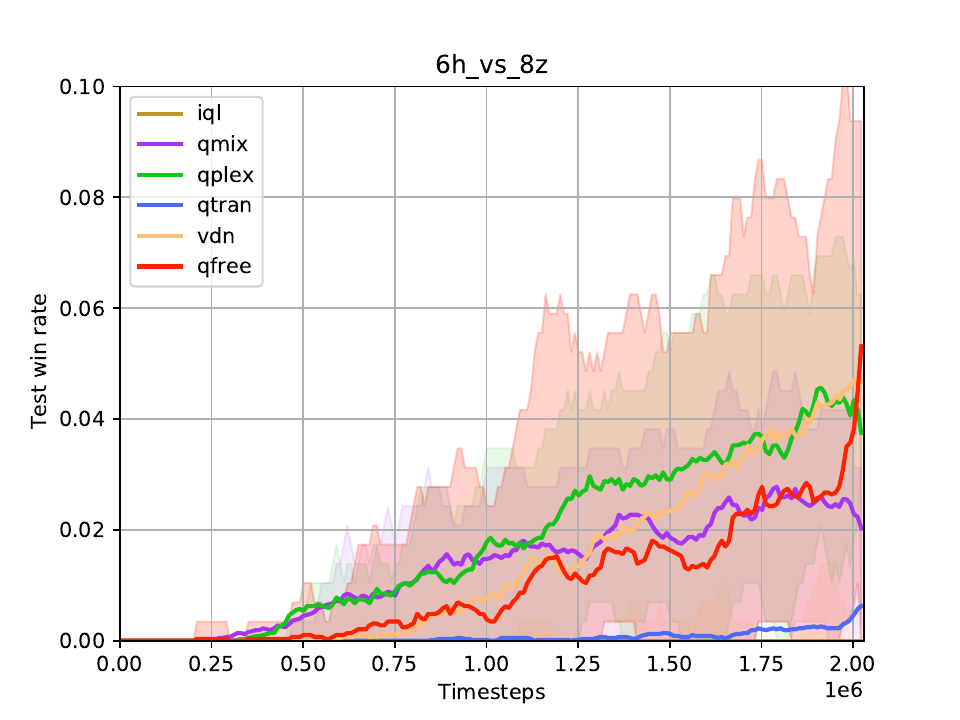}	
	\label{6hvs8z}}

	\caption{The test winning rate of different algorithms.}
	\label{fig_SMAC}
\end{figure*}
	
SMAC is a generalized MARL algorithm testing environment, which are more complicated than the matrix games, thus providing more convinced results to test algorithm performance. SMAC is based on Starcraft II, a popular real-time strategy (RTS) game. Similar to most RTS games, Starcraft II is divided into macromanagement (economy and resource management) and micromanagement (combat unit control). SMAC focuses on micromanagement, which aims to train ally units to defeat the Starcraft II game's built-in scripted AI-controlled enemy units through an MARL algorithm. 
	
In Starcraft II, every unit has the ability to perform independent actions such as moving, attacking enemy units and healing ally units. Different maps in the game feature various types of units, including Stalkers, Zealots, Colossi, Marines, Banelings and Medivacs, and each with their own distinct abilities like healing, self-destructing, shielding and so on. The main objective of the game is to maximize damage inflicted on enemy units while minimizing damage to ally units through careful micromanagement. To achieve this objective, ally agents often need to learn and follow specific policies, such as focusing fire to prioritize the destruction of specific enemies or kiting to utilize range advantage and control distances effectively. The evaluation of algorithm performances is facilitated by the simulation of SMAC, which offers a variety of maps of different difficulty levels as shown in Table \ref{tab4}. These maps are categorized into three levels: easy, hard, and super hard, with the latter being extremely challenging for algorithms. Each map presents unique battlefield environments and includes different types and quantities of ally and enemy units.

In the SMAC environment, the state of the agent includes the position, health, shield, and unit type of all the ally and enemy units, as well as map information. However, each unit can only observe a limited area within its sight range. The observation includes the distance, relative position $x$, relative position $y$, health, shield, and unit type of ally and enemy units within this sight range. Outside of this observation range, the unit has no knowledge of other unit information. This setup creates a Dec-POMDP environment. In each step, ally unit agent takes actions from a discrete action space based on its observation information. The action space includes options such as ``no operation", ``move [direction]", ``attack [enemy unit id]", ``heal [ally unit id]" (for maps with Medivacs), and ``stop". The size of the action space depends on the number of enemy units present on the map. The reward function is consistent across the maps, with killing enemy units resulting in a reward of $10$ and winning the game resulting in a larger reward of $200$.

In order to assess the superiority of the proposed QFree algorithm, the performance of this algorithm is compared with popular MARL algorithms, namely IQL, VDN, QMIX, QTRAN and QPLEX, in the SMAC environment. To ensure a fair comparison, the reward function and other training hyperparameters are kept the same for all algorithms. Table \ref{tab4} presents the winning rates of the algorithms after training for $2.03$ million time steps on different maps. The results are averaged over 20 training sessions to reduce the randomness. The highest winning rate is indicated in bold red, while the lowest is indicated in bold blue. It can be observed that the QFree algorithm achieves the highest winning rate on most maps. Moreover, QTRAN performs well in matrix games, but shows lower winning rates than most methods in complex environments. Fig. \ref{fig_SMAC} illustrates the growth curves of the win rates for the different algorithms on $9$ representative maps in SMAC, where the curves represent the average win rate over the $20$ runs, with the shaded area indicating the $75\%$ confidence interval. From the figure, it can be seen that the proposed QFree algorithm achieves the highest win rate on challenging SMAC maps in most maps.

Overall, the comparison results demonstrate the superiority of the QFree algorithm in the SMAC environment compared to other MARL algorithms. The QFree algorithm achieves higher win rates, making it a promising choice for addressing challenges in complex environments.

Moreover, the training results of the QFree algorithm applied to the MMM2 and 2c\_vs\_64zg maps are showcased. Representative video frames are provided in Figs. \ref{MMM2c} and \ref{2cvs64zgc}. The attainment of the focusing fire skill by the SMAC agent is demonstrates in Figs. \ref{focusing fire_1} and \ref{focusing fire_2}. Specifically, the attack unit in the MMM2 map identifies a priority to focus fire on the healing unit. Similarly, in the 2c\_vs\_64zg map, two Colossi are observed to prioritize focusing fire on one side enemy unit. Fig.\ref{healing} reveals that the Medivac tends to prioritize the healing of frontline ally units that have taken damage. Fig.\ref{kiting} evidence that the Colossi are capable of learning the tactic of kiting. This strategy involves protecting the unit from harm by maintaining a safe distance from enemy units and leveraging the range advantage to assault the enemy. In conclusion, under the instruction of the QFree algorithm, the agent is capable of acquiring and implementing certain human-like game policies in Starcraft II, ultimately securing a victory as illustrated in Figs. \ref{win_1} and \ref{win_2}.

\begin{figure*}[!t]
	\centering
	\subfloat[Focusing fire.]{
		\includegraphics[width=0.255\linewidth]{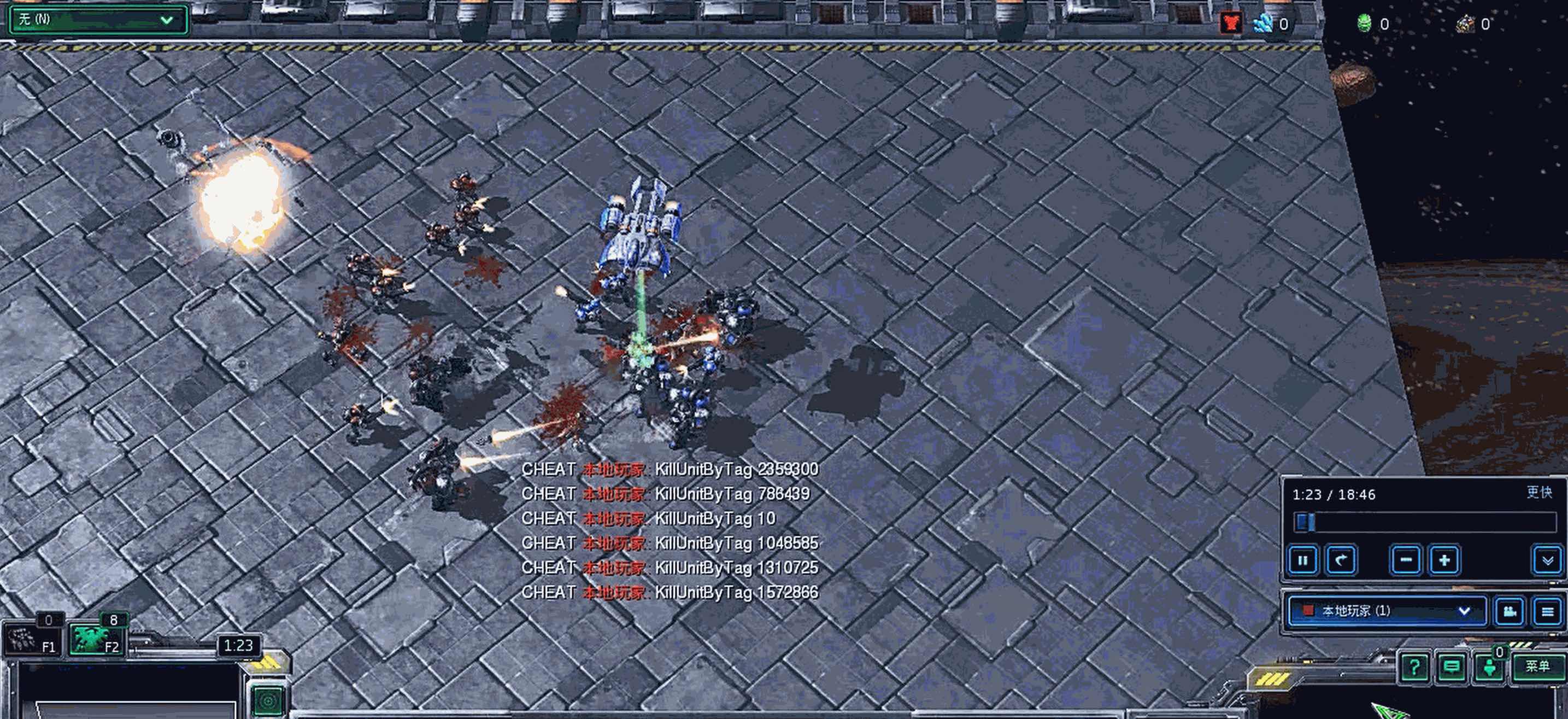}	
		\label{focusing fire_1}}
	\hfill  
	\subfloat[Healing.]{
		\includegraphics[width=0.255\linewidth]{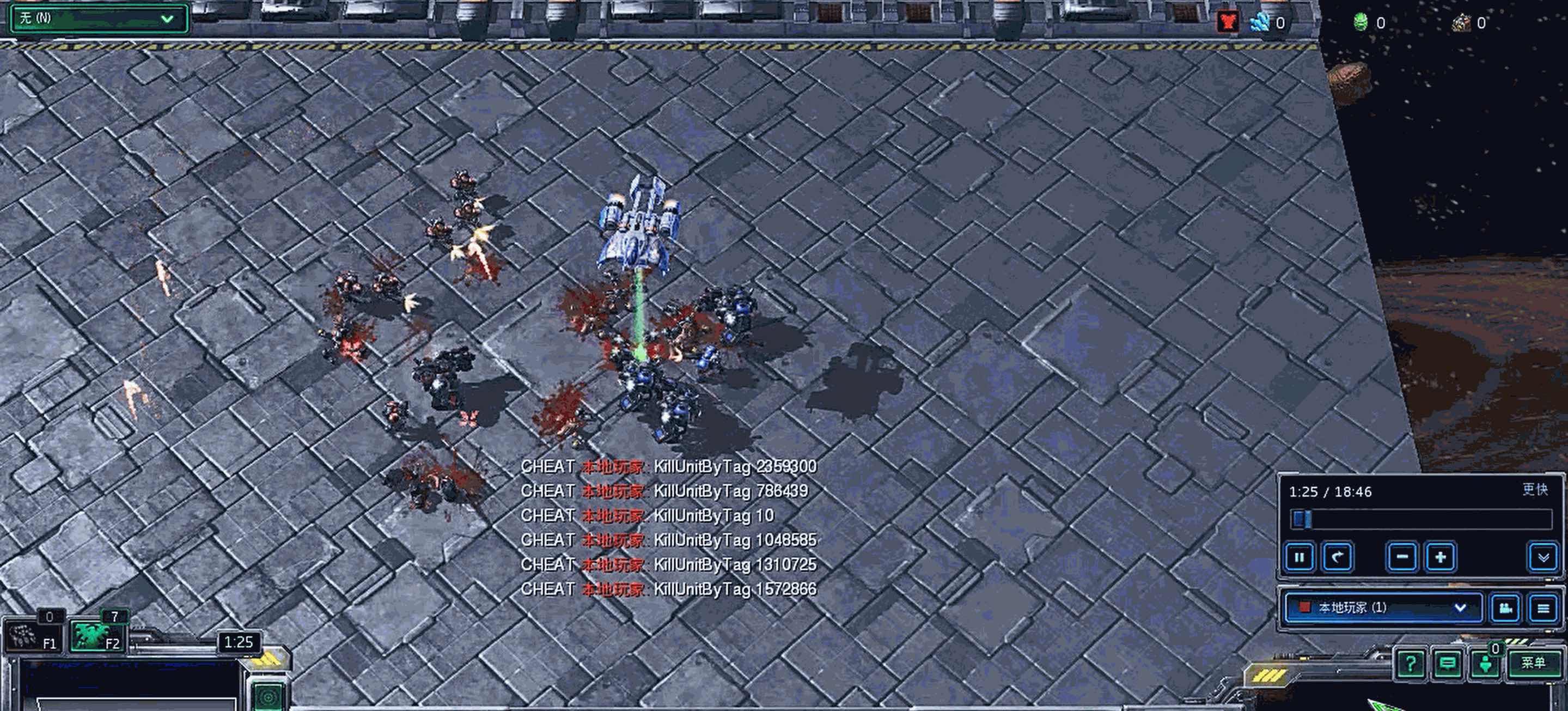}	
		\label{healing}}
	\hfill 
	\subfloat[Win.]{
		\includegraphics[width=0.255\linewidth]{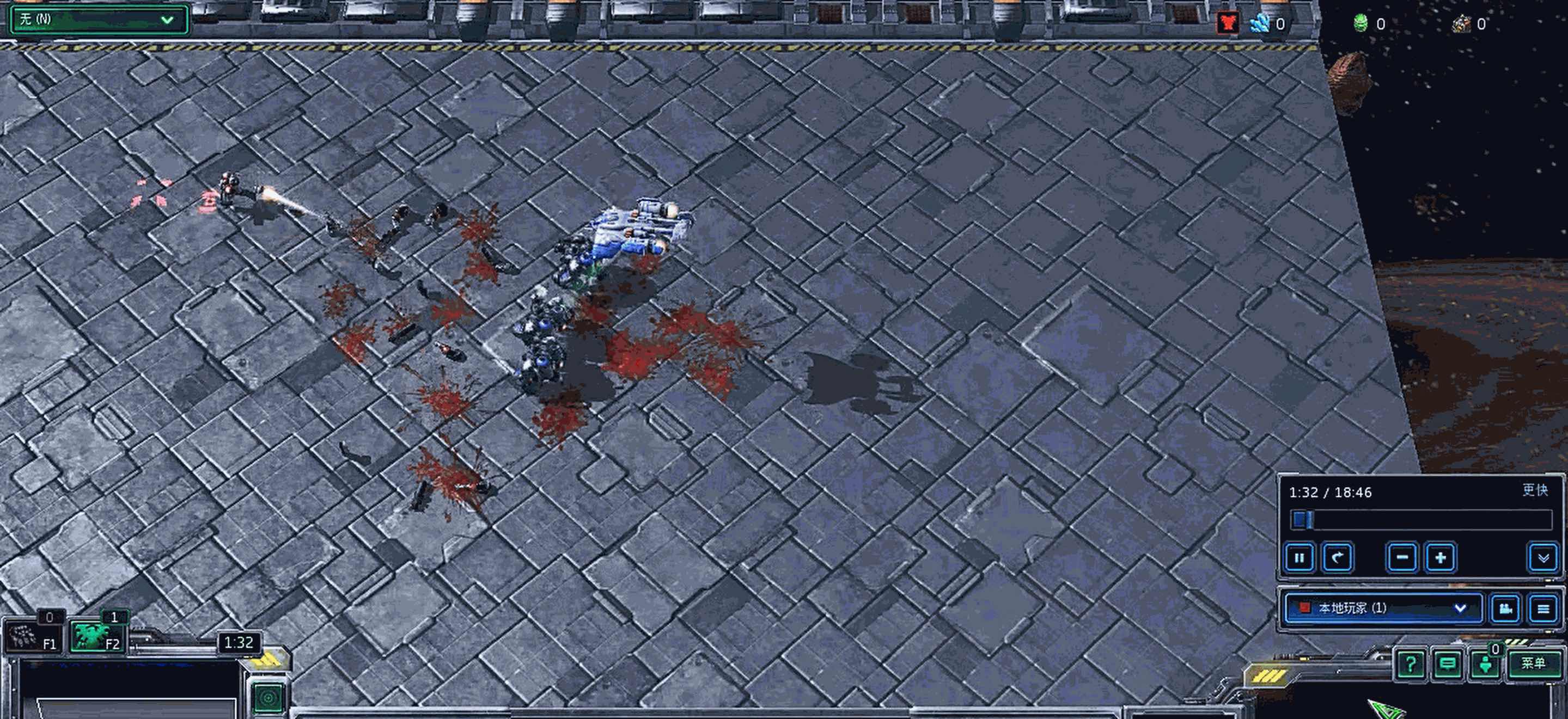}	
		\label{win_1}}
	\caption{MMM2 battle operation flowchart.}
	\label{MMM2c}
\end{figure*}

\begin{figure*}[!t]
	\centering
	\subfloat[Focusing fire.]{
		\includegraphics[width=0.255\linewidth]{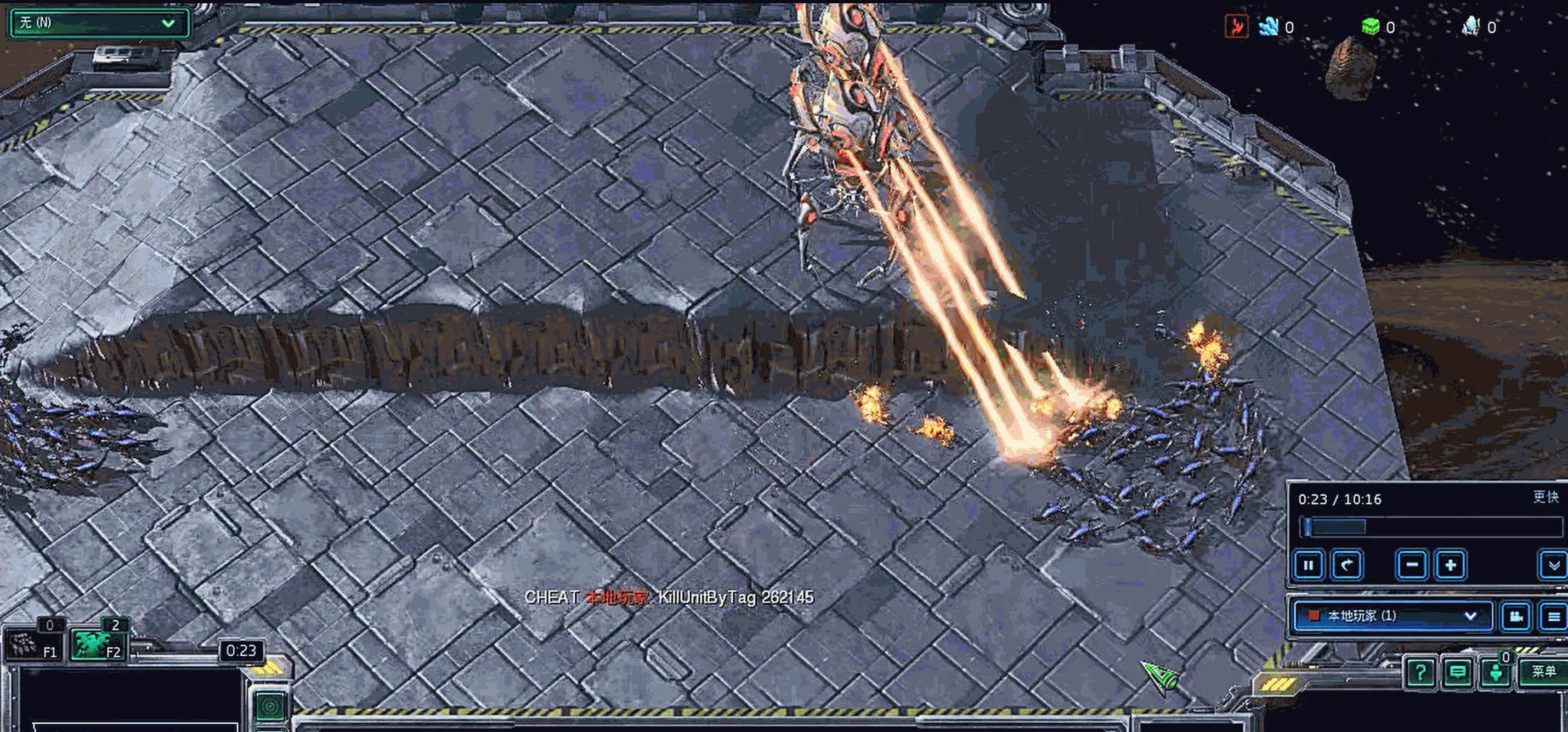}	
		\label{focusing fire_2}}
	\hfill  
	\subfloat[Kiting.]{
		\includegraphics[width=0.255\linewidth]{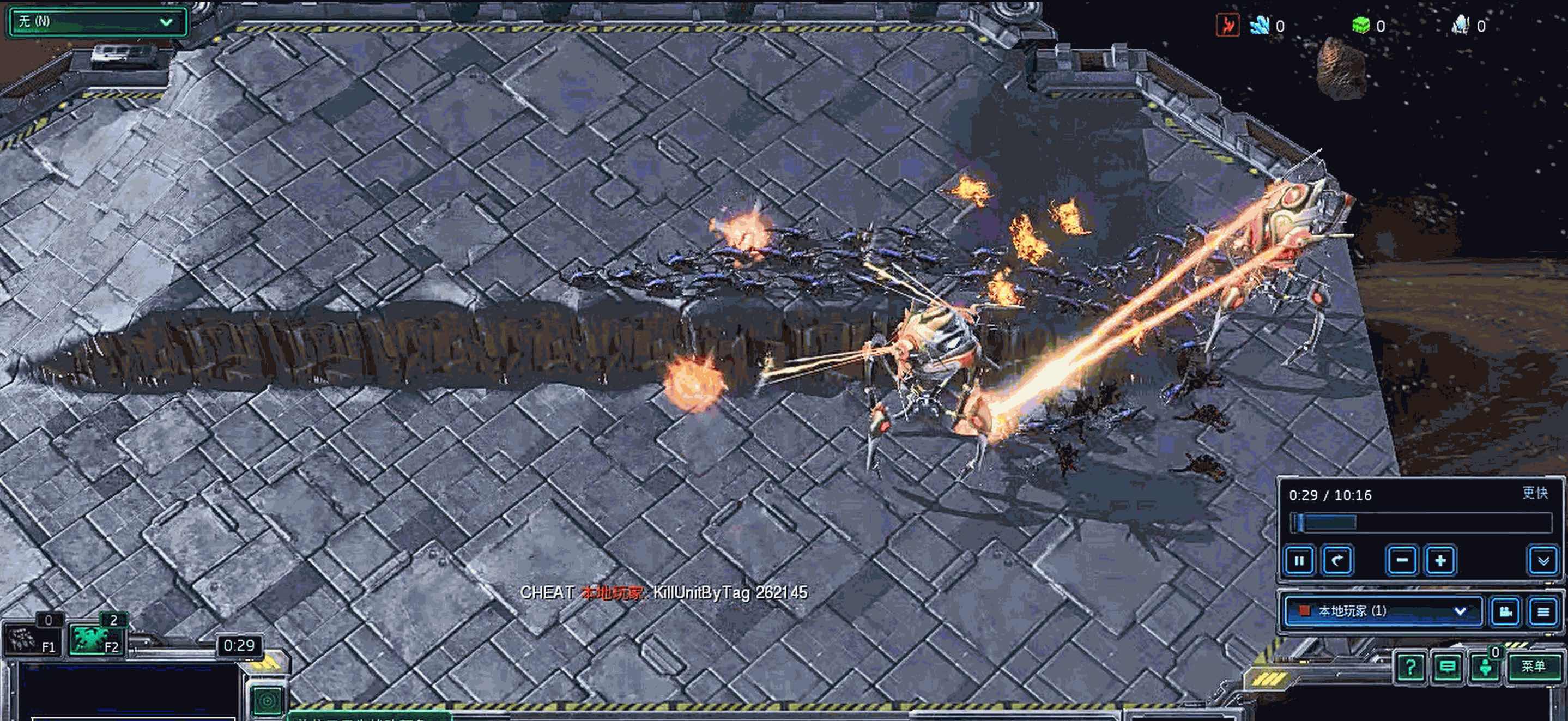}	
		\label{kiting}}
	\hfill 
	\subfloat[Win.]{
		\includegraphics[width=0.255\linewidth]{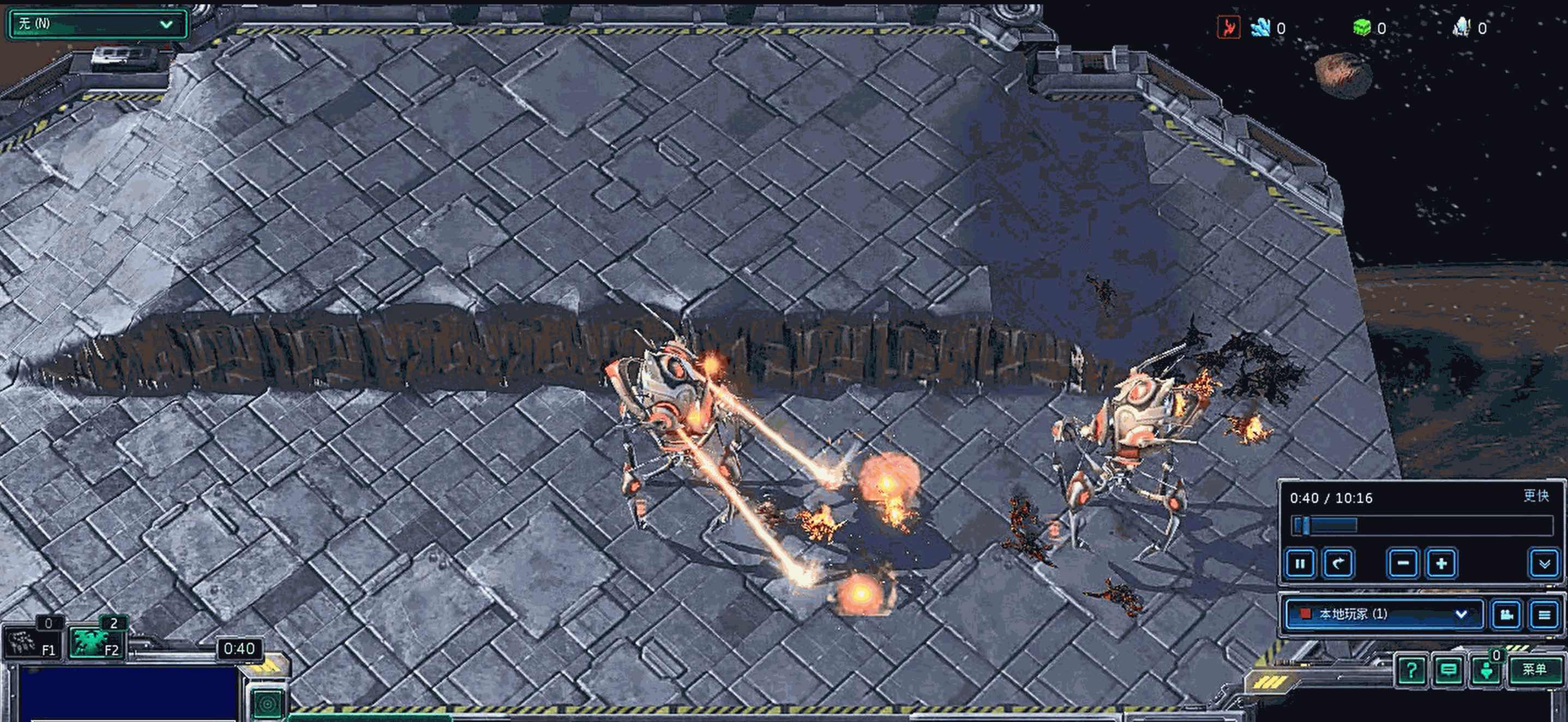}	
		\label{win_2}}
	\caption{2c\_vs\_64zg battle operation flowchart.}
	\label{2cvs64zgc}
\end{figure*}

\subsection{Ablation Experiments}
\label{Ablation Experiments}

In this section, we employ two ablation experiments to validate the effectiveness of the proposed mixing network structure and the MARL algorithm. Firstly, we conduct an ablation experiment to confirm that the designed mixing network structure improves the expression ability of the non-monotone joint action value function. Subsequently, we carry out another ablation experiment to validate the necessity of the regularization term in the policy evaluation algorithm.

\subsubsection{Mixing Network Architecture}

\begin{figure*}[!t]
	\centering
	\subfloat[step 0.]{
		\includegraphics[width=0.25\linewidth]{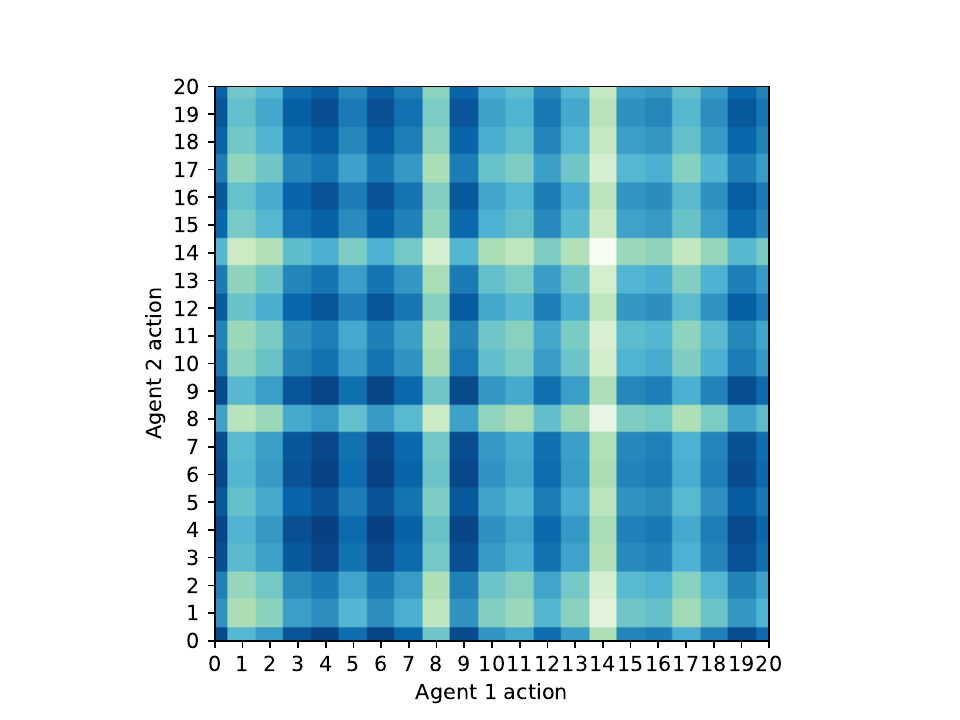}	
		\label{a0}}
	\subfloat[step 500.]{
		\includegraphics[width=0.25\linewidth]{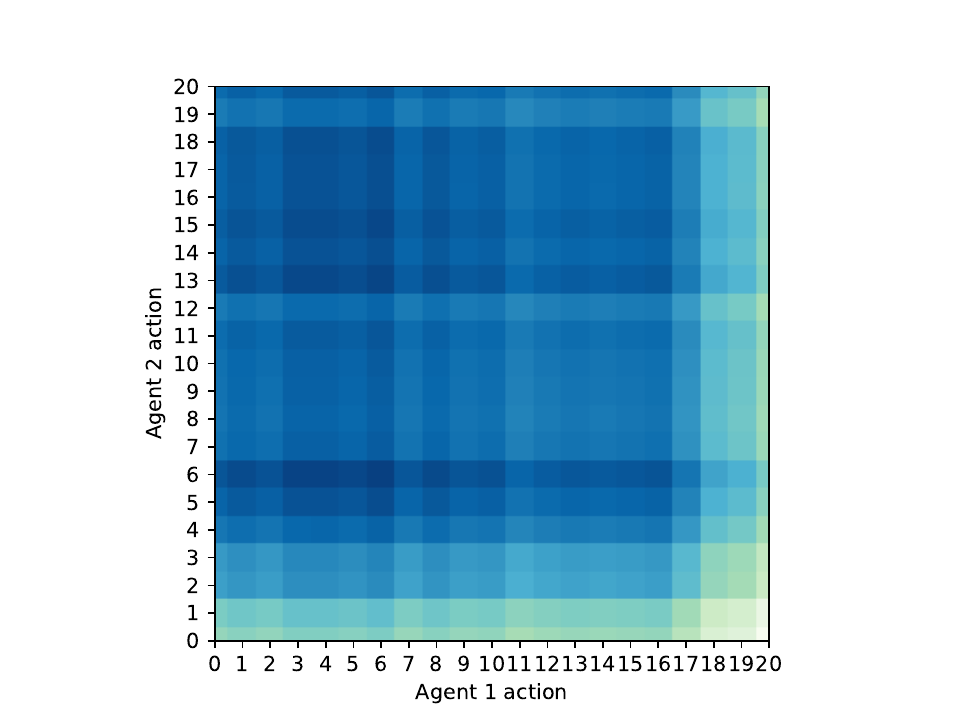}	
		\label{a500}}
	\subfloat[step 1000.]{
		\includegraphics[width=0.25\linewidth]{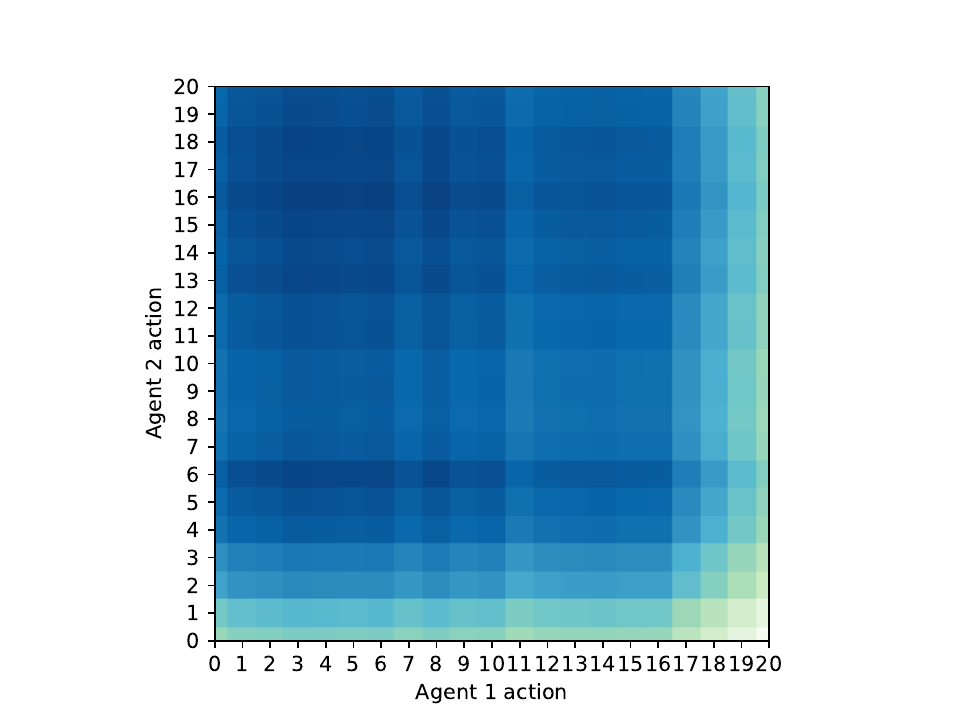}	
		\label{a1000}}
	\subfloat[step 2000.]{
		\includegraphics[width=0.25\linewidth]{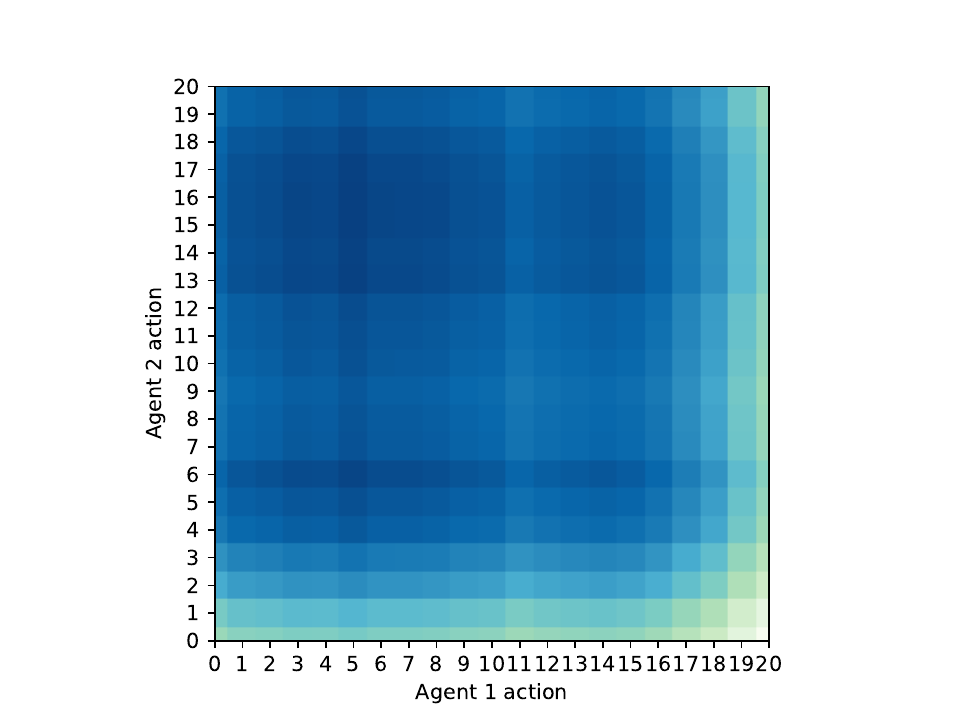}	
		\label{a2000}}
	\caption{Matrix game training process of QFree-Sum.}
	\label{Matrix game2a}
\end{figure*}

\begin{figure*}[!t]
	\centering
	\subfloat[2c\_vs\_64zg.]{
		\includegraphics[width=0.31\linewidth]{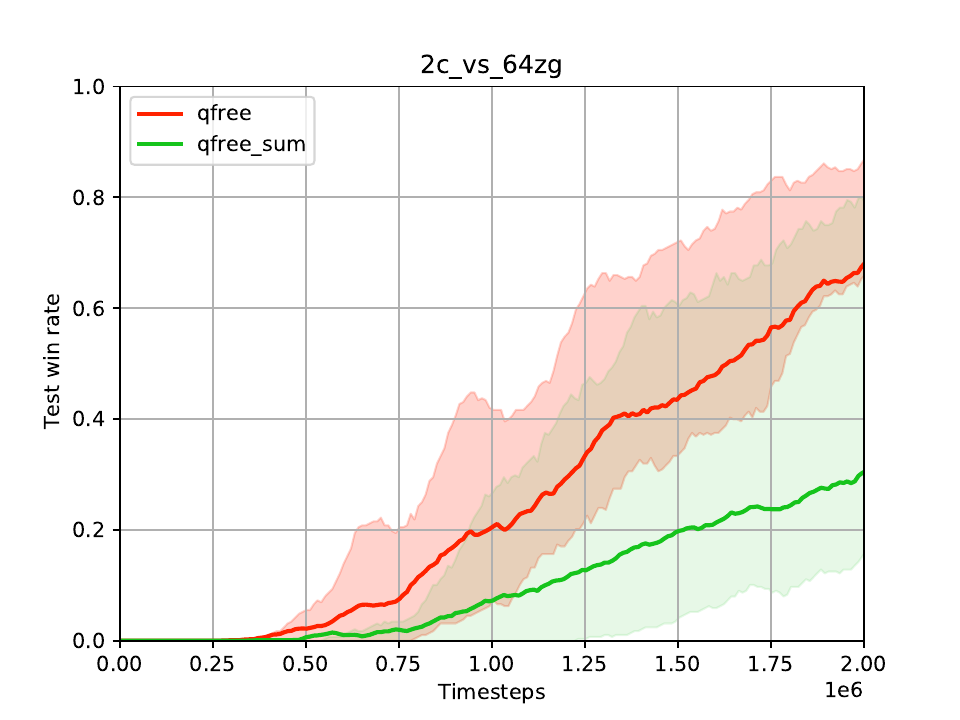}	
		\label{2cvs64zg_s}}
	\hfill  
	\subfloat[5m\_vs\_6m.]{
		\includegraphics[width=0.31\linewidth]{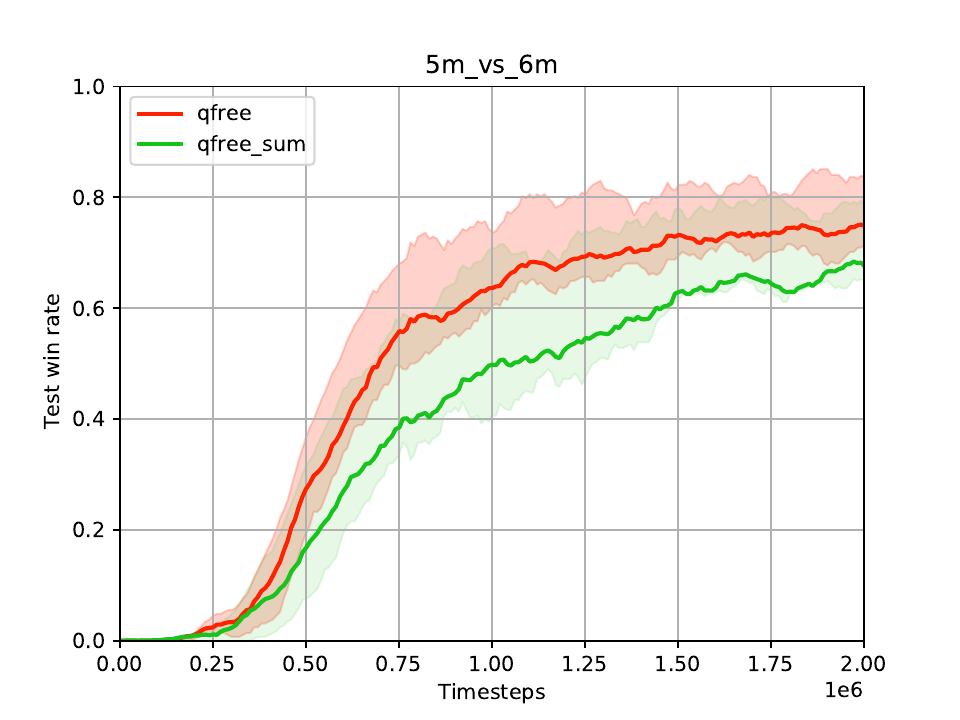}	
		\label{5mvs6m_s}}
	\hfill 
	\subfloat[3s5z\_vs\_3s6z.]{
		\includegraphics[width=0.31\linewidth]{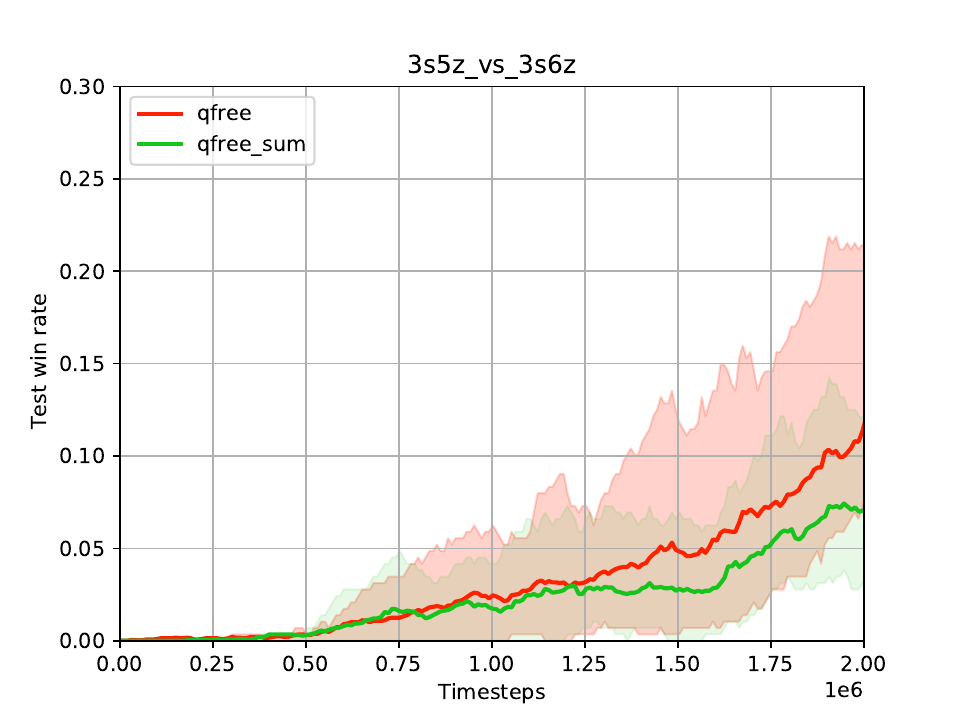}	
		\label{3s5z_vs_3s6z_s}}
	\caption{The test winning rate of QFree and QFree-Sum.}
	\label{fig_SMAC_A}
\end{figure*}	

\begin{figure*}[!t]
	\centering
	\subfloat[2c\_vs\_64zg.]{
		\includegraphics[width=0.31\linewidth]{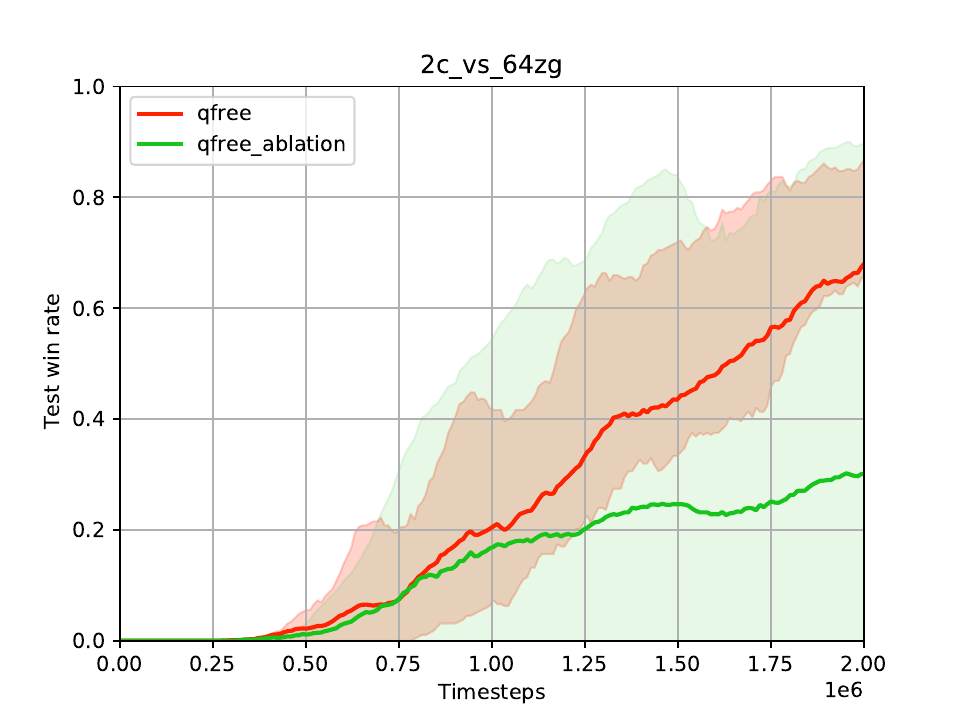}	
		\label{2cvs64zg_ab}}
	\hfill  
	\subfloat[5m\_vs\_6m.]{
		\includegraphics[width=0.31\linewidth]{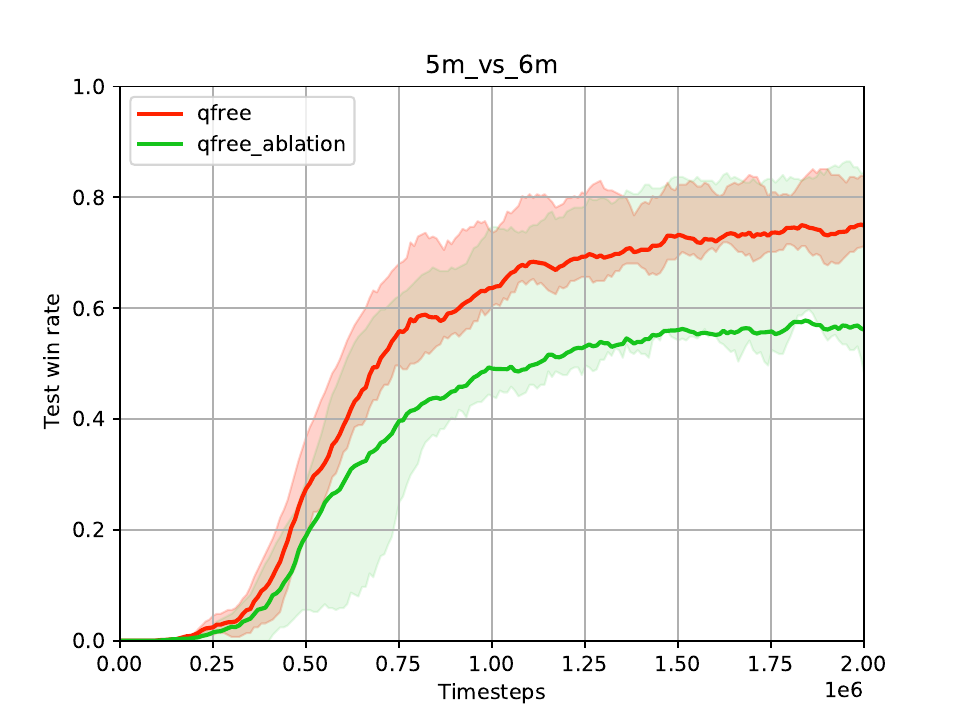}	
		\label{5mvs6m_ab}}
	\hfill 
	\subfloat[3s5z\_vs\_3s6z.]{
		\includegraphics[width=0.31\linewidth]{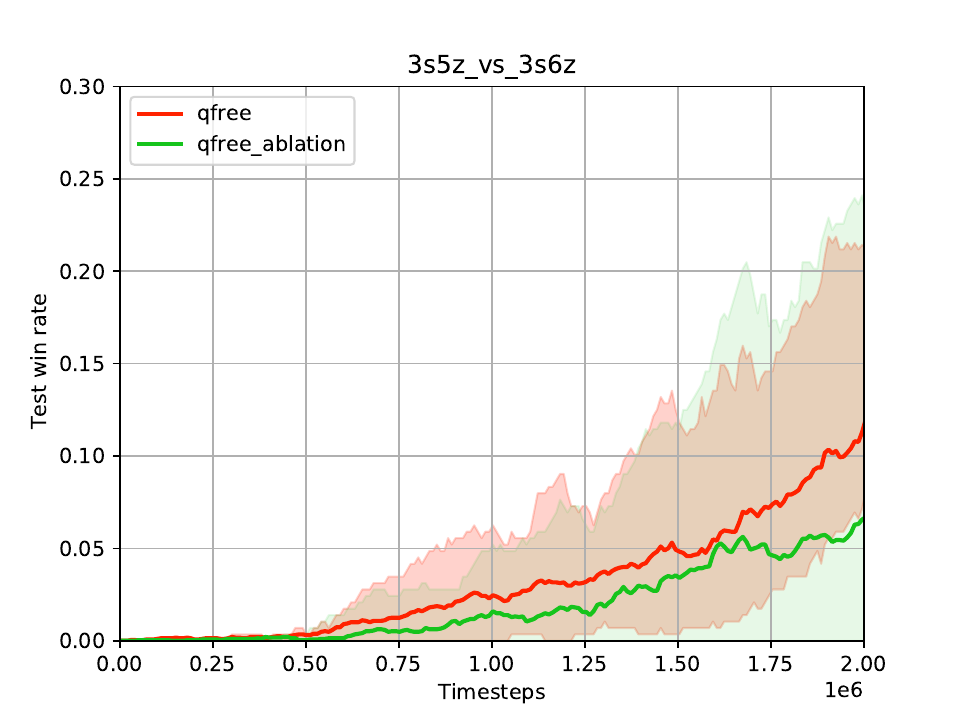}	
		\label{3s5z_vs_3s6z_ab}}
	\caption{The test winning rate of QFree and QFree-Ablation.}
	\label{fig_SMAC_Ab}
\end{figure*}	

In this subsection, we investigate the impact of two feedforward mixing networks (\ref{trans_2}) on the expressive range of the joint action value function. We first transform the mixing network (\ref{trans_2}) into a summation form
\begin{equation}\label{trans_3}
	\begin{aligned}
		V_{tot}(\bm z)=\sum_{i=1}^{n} V_{i}(\bm z),\\
		A_{tot}(\bm z, \bm a)=\sum_{i=1}^{n} A_{i}(\bm z, a_i).
	\end{aligned}
\end{equation}

For clarity, we refer to the algorithm that computes $V_{tot}(\bm z)$ and $A_{tot}$ using (\ref{trans_3}) as QFree-Sum. We conducted experiments using a 21-dimensional matrix game (\ref{matrix}) to evaluate the performance of QFree-Sum. Fig. \ref{Matrix game2a} illustrates the changes in the reward function matrix learned throughout the entire training process of QFree-Sum. Compared to the real reward function depicted in Fig. \ref{Matrix game1}, it appears that QFree is unable to learn the optimal policy since it does not learn the actual reward function matrix. For the experimental results of QFree in Fig. \ref{Matrix game2}, it can be observed that QFree can express a larger range of action joint value function class compared to QFree-Sum, particularly in the non-monotone case. This provides evidence for the efficacy of designed mixing network in expanding the expressive range of the joint action value function class.

In addition, we evaluate the performance of QFree and QFree-Sum in complex environments using three SMAC maps: 2c\_vs\_64zg, 5m\_vs\_6m, and 3s5z\_vs\_3s6z. Similar to the previous experiment, we compute the average winning percentage over 20 training sessions and present them along with their corresponding $75\%$ confidence intervals in Fig. \ref{fig_SMAC_A}. It can be seen that QFree has achieved a higher winning rate in three maps, indicating that QFree has better performance than QFree-Sum in complex environments. Compared with QFree-Sum, QFree has the ability to approximate the more complex relationship between $Q_i$ and $Q_{tot}$, so it has better performance in complex environments.

\subsubsection{Policy Evaluation Algorithm}
	
In this subsection, we aim to demonstrate the necessity of the regularization constraint terms in Algorithm \ref{alg1} through ablation experiments. Specifically, we compare the performance of two algorithms: QFree and QFree-Ablation. The only difference between these algorithms lies in the use of the regularization term in the loss function. It is important to note that all other structural parameters of these algorithms remain consistent. To evaluate their performance, we conduct experiments on three different maps, namely 2c\_vs\_64zg, 5m\_vs\_6m, and 3s5z\_vs\_3s6z. Similar to the previous experiment settings, the results are shown in  Fig. \ref{fig_SMAC_Ab}.
	
From the results presented in Fig. \ref{fig_SMAC_Ab}, it is evident that QFree exhibits a higher average winning rate compared to QFree-Ablation across various map environments. Additionally, the variance of the winning rate for QFree is lower than that of QFree-Ablation, indicating that it is superior to QFree-Ablation in terms of stability. Furthermore, it should be emphasized that due to QFree-Ablation's failure to satisfy the IGM principle, there are instances where it is unable to learn a cooperative policy among the multi-agents, resulting in the $0$ winning rate.
	
In conclusion, the ablation experiment shows that QFree outperforms QFree-Ablation in terms of overall performance. The inclusion of regularization constraint terms in QFree is proved to be effective and the developed advantage function based IGM principle holds.

\section{Conclusion}
\label{Conclusion} 

In this paper, we have proposed a universal value function factorization method for MARL. A novel advantage function based IGM principle has been designed to relax the additional constraints on the value function. On this basis, a new mixing network architecture and an MARL algorithm has been designed to facilitate implementation, where the principle was satisfied by adding a regulation term. Moreover, the effectiveness and advantage of the proposed algorithm have been verified through two experimental tests, demonstrating excellent performance even in complex environments with a vast action state space.

\appendix
\setcounter{equation}{0} 
\setcounter{theorem}{0}
\renewcommand{\theequation}{A\arabic{equation}}
\section{Appendix}
\label{Appendix}

\subsection{complete proofs of Theorem 1}
\label{Appendix A1}
\begin{theorem}
	For the advantage function based IGM principle in (\ref{advantage-based IGM}), let $a^*_i=\mathop{\arg\max}\limits_{a_i \in A}A_{i}(z_i, a_i)$ and $\bm a^*=[a^*_1,a^*_2,...,a^*_n]$. Then the principle (\ref{advantage-based IGM}) holds if and only if the following conditions are satisfied:
	\begin{equation}\label{constraint}
		\begin{aligned}
			\begin{cases}
				A_{tot}(\bm z, \bm a)\leq0 & \bm a\neq\bm a^*,\\
				A_{tot}(\bm z, \bm a)=0 & \bm a=\bm a^*.
			\end{cases}
		\end{aligned}
	\end{equation}
\end{theorem}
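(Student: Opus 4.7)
The plan is to leverage the zero-advantage-at-optimum convention established in (\ref{zero}): $V_{tot}(\bm z) = \max_{\bm a \in A^N} Q_{tot}(\bm z, \bm a)$ and $V_i(z_i) = \max_{a_i \in A} Q_i(z_i, a_i)$. Combined with the dueling decomposition $Q = V + A$, this forces $A_{tot}(\bm z, \bm a) \leq 0$ for every $\bm a$ with $\max_{\bm a} A_{tot}(\bm z, \bm a) = 0$, and likewise $A_i(z_i, a_i) \leq 0$ with $A_i(z_i, a_i^*) = 0$. So the per-agent $a_i^*$ is precisely a zero of $A_i$. Under this standing assumption the theorem reduces to a short two-way check.

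For sufficiency ($\Leftarrow$), I would assume (\ref{constraint}). Then $A_{tot}(\bm z, \bm a^*) = 0 \geq A_{tot}(\bm z, \bm a)$ for every $\bm a$, so $\bm a^* \in \arg\max_{\bm a \in A^N} A_{tot}(\bm z, \bm a)$. By the definition of $a_i^*$, the right-hand side of (\ref{advantage-based IGM}) is exactly $\bm a^*$, so the identity holds. For necessity ($\Rightarrow$), I would assume (\ref{advantage-based IGM}). The identity places $\bm a^*$ in $\arg\max_{\bm a} A_{tot}(\bm z, \bm a)$; combined with $\max_{\bm a} A_{tot}(\bm z, \bm a) = 0$ from the convention, this yields $A_{tot}(\bm z, \bm a^*) = 0$. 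For any other $\bm a \neq \bm a^*$, the same convention gives $A_{tot}(\bm z, \bm a) \leq 0$, recovering (\ref{constraint}).

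The main obstacle is not algebraic but conceptual: I must be explicit that the zero-advantage convention (\ref{zero}) is in force, since otherwise $V_{tot}$ and $A_{tot}$ are defined only up to a common additive shift, and (\ref{constraint}) would be false under an arbitrary normalization. A secondary subtlety concerns the set-valued nature of $\arg\max$ when maxima are not unique: I would read both sides of (\ref{advantage-based IGM}) as requiring $\bm a^*$ to lie in the joint-argmax set, which is consistent with (\ref{constraint}) permitting $A_{tot}(\bm z, \bm a) = 0$ at other tied actions as well. Apart from making these points explicit, each implication collapses to one or two lines.
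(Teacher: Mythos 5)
Your proposal is correct and follows essentially the same route as the paper's own proof: both directions hinge on the normalization (\ref{zero}) forcing $A_{tot}\leq 0$ with equality at the maximizer of $Q_{tot}$, combined with the definition of $\bm a^*$ as the vector of individual argmaxes. If anything, your derivation of $A_{tot}(\bm z,\bm a)\leq 0$ for all $\bm a$ directly from the convention is slightly cleaner than the paper's appeal to the strict inequality ``$A_{tot}(\bm z,\bm a)<A_{tot}(\bm z,\bm a^*)$'' (which implicitly assumes a unique maximizer), and your explicit treatment of the set-valued $\arg\max$ closes a gap the paper leaves unstated.
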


\begin{proof}
	Suppose the advantage function class satisfying the advantage function based IGM Principles is
	\begin{equation}\label{a2}
		\overline{\bm A}=\{\overline{A}_{tot} \in \mathbb{R}^{|Z||A^N|}, [\overline{A}_{i} \in \mathbb{R}^{|Z||A|}]_{i=1}^n|\text{satisfy}\: \:(\ref{advantage-based IGM})\},
	\end{equation}
	where $\overline{A}_{tot}$ and $\overline{A}_{i}$ denote the joint and individual advantage function, respectively. Similarly, defining
	\begin{equation}\label{a3}
		\hat{\bm A}=\{\hat{A}_{tot} \in \mathbb{R}^{|Z||A^N|},[\hat{A}_{i} \in \mathbb{R}^{|Z||A|}]_{i=1}^n|\text{satisfy}\: \:(\ref{constraint})\}
	\end{equation}
	as the advantage function class that satisfies the conditions of Theorem $1$, where $\hat{A}_{tot}$ and $\hat{A}_{i}$ are the corresponding joint and individual advantage function, respectively. We will prove that (\ref{a2}) and (\ref{a3}) are mutual sufficient and necessary conditions.
	
	From principle (\ref{advantage-based IGM}) we can obtain
	\begin{equation}\label{advantage-based IGM_a1}
		\mathop{\arg\max}\limits_{\bm a \in A^N}\overline{A}_{tot}(\bm z, \bm a)=
		\begin{pmatrix}
			{\arg\max}_{a_1 \in A}\overline{A}_{1}(z_1, a_1)\\
			\vdots\\
			{\arg\max}_{a_n \in A}\overline{A}_{n}(z_n, a_n)\\
		\end{pmatrix}.
	\end{equation}
	We define $\overline{\bm a}^*=\mathop{\arg\max}\limits_{\bm a \in A^N}\overline{A}_{tot}(\bm z, \bm a)$ and $\overline{a}_i^*=\mathop{\arg\max}\limits_{a_i \in A^N}\overline{A}_{i}(\bm z, a_i)$. From (\ref{advantage-based IGM_a1}) we get $\overline{\bm a}^*=[\overline{a}_1^*,\overline{a}_2^*,...,\overline{a}_n^*]$. According to (\ref{zero}), it can be obtained that
	\begin{equation}\label{constraint_a1}
		\begin{aligned}
			\overline{V}_{tot}(\bm z)=\overline{Q}_{tot}(\bm z, \overline{\bm a}^*).\\
		\end{aligned}
	\end{equation}
	Thus
	\begin{equation}\label{constraint_a2}
		\begin{aligned}
			\overline{A}_{tot}(\bm z, \overline{\bm a}^*)=\overline{V}_{tot}(\bm z)-\overline{Q}_{tot}(\bm z, \overline{\bm a}^*)=0.\\
		\end{aligned}
	\end{equation}
	Due to the definition of the $\arg\max$ operator,  if $\overline {\bm a}\neq \overline{\bm a}^*$ we have
	\begin{equation}\label{constraint_a3}
		\begin{aligned}
			\overline{A}_{tot}(\bm z, \overline {\bm a})  < \overline{A}_{tot}(\bm z, \overline{\bm a}^*)=0.\\
		\end{aligned}
	\end{equation}
	In summary, it can be deduced from Definition 2 that Theorem 1 holds, i.e., $\overline{\bm A} \subseteq \hat{\bm A}$.
	
	Next we will show that Definition \ref{definition2} can be derived. According to $\hat{\bm a}^*=[\hat{a}^*_1,\hat{a}^*_2,...,\hat{a}^*_n]$ in Theorem \ref{theorem1}, we can obtain
	\begin{equation}\label{advantage-based IGM_a2}
		\hat{\bm a}^*=
		\begin{pmatrix}
			{\arg\max}_{a_1 \in A}\hat{A}_{1}(z_1, a_1)\\
			\vdots\\
			{\arg\max}_{a_n \in A}\hat{A}_{n}(z_n, a_n)\\
		\end{pmatrix}.
	\end{equation}
	According to Theorem \ref{theorem1}, we can see that $\hat{A}_{tot}(\bm z, \bm a^*)=0$. By combined with (\ref{Advantage}), we have
	\begin{equation}\label{Advantage_a}
		\begin{aligned}
			\hat{Q}_{tot}(\bm z,  \hat{\bm a}^*)=\hat{V}_{tot}(\bm z).
		\end{aligned}
	\end{equation}
	We can deduce from (\ref{zero}) that
	\begin{equation}\label{zero_a1}
		\begin{aligned}
			\hat{Q}_{tot}(\bm z,  \hat{\bm a}^*)=\mathop{\max}\limits_{\bm a \in A^N}\hat{Q}_{tot}(\bm z, \bm a).\\
		\end{aligned}
	\end{equation}
	Therefore, by using (\ref{zero_a1}) we can get
	\begin{equation}\label{zero_a2}
		\begin{aligned}
			\hat{\bm a}^*=\mathop{\arg\max}\limits_{\bm a \in A^N}\hat{Q}_{tot}(\bm z, \bm a)=\mathop{\arg\max}\limits_{\bm a \in A^N}\hat{A}_{tot}(\bm z, \bm a).
		\end{aligned}
	\end{equation}
	Thus, it is natural to get
	\begin{equation}\label{advantage-based IGM_a3}
		\mathop{\arg\max}\limits_{\bm a \in A^N}\hat{A}_{tot}(\bm z, \bm a)=
		\begin{pmatrix}
			{\arg\max}_{a_1 \in A}\hat{A}_{1}(z_1, a_1)\\
			\vdots\\
			{\arg\max}_{a_n \in A}\hat{A}_{n}(z_n, a_n)\\
		\end{pmatrix},
	\end{equation}
	which means that $\hat{\bm A}\subseteq\overline{\bm A}$. According to the above results, $\hat{\bm A}$ and $\overline{\bm A}$ are mutually sufficient and necessary, i.e., $\hat{\bm A}$ and $\overline{\bm A}$ are equivalent ($\hat{\bm A} \Leftrightarrow \overline{\bm A}$). Hence, the advantage function based IGM principle and the proposed one in Theorem \ref{theorem1} are equivalent, which completes the proof. $\hfill\Box$
\end{proof}


\begin{thebibliography}{1}
\bibliographystyle{IEEEtran}

\bibitem{Cao2013} Y. C. Cao, W. W. Yu, W. Ren and G. R. Chen, ``An overview of recent progress in the study of distributed multi-agent coordination,'' \emph{IEEE Trans. Industrial Informatics}, vol. 9, no. 1, pp. 427--438, Feb. 2013.

\bibitem{Panait2005} L. Panait and S. Luke, ``Cooperative multi-agent learning: the state of the art,'' \emph{Autonomous Agents and Multi-Agent Systems}, vol. 11, no. 3, pp. 387--434, Aug. 2005.

\bibitem{Busoniu2008} L. Busoniu, R. Babuska and B. De Schutter, ``A comprehensive survey of multiagent reinforcement learning,'' \emph{IEEE Trans. Systems, Man, and Cybernetics}, vol. 38, no. 1, pp. 156--172, Feb. 2008.

\bibitem{Li2023} Q. Li, Z. Peng, L. Feng, Q. Zhang, Z. Xue and B. Zhou, ``MetaDrive: composing diverse driving scenarios for generalizable reinforcement learning," \emph{IEEE Transactions on Pattern Analysis and Machine Intelligence}, vol. 45, no. 3, pp. 3461--3475, Mar. 2023.

\bibitem{Silver2017} D. Silver, J. Schrittwieser and K. Simonyan et al., ``Mastering the game of Go without human knowledge,'' \emph{Nature}, vol. 550, no. 1, pp. 345--359, Oct. 2017.

\bibitem{Levine2016} S. Levine, C. Finn, T. Darrell and P. Abbeel, ``End-to-end training of deep visuomotor policies,'' \emph{The Journal of Machine Learning Research}, vol. 17, no. 1, pp. 1334--1373, Jan. 2016.

\bibitem{Tunyasuvunakool2021}  K. Tunyasuvunakool, J. Adler and Z. Wu et al., ``Highly accurate protein structure prediction for the human proteome,'' \emph{Nature}, vol. 596, no. 1, pp. 590--596, Jul. 2021.

\bibitem{Tuyls2017} K. Tuyls and G. Weiss, ``Multiagent learning: basics, challenges, and prospects,'' \emph{AI Magazine}, vol. 33, no. 3, pp. 41, Jul. 2017.

\bibitem{Oliehoek2016} F. A. Oliehoek and C. A. Amato, \emph{A concise introduction to decentralized POMDPs}, Cham, Switzerland: Springer International Publishing, 2016.

\bibitem{Lowe2017} R. Lowe, Y. I. Wu and A. Tamar et al., ``Multi-agent actor-critic for mixed cooperative-competitive environments,'' \emph{Advances in Neural Information Processing Systems}, vol. 30, Dec. 2017.

\bibitem{Yang2020} Y. Yang, J. Hao, B. Liao, K. Shao, G. Chen, W. Liu and H. Tang, ``Qatten: a general framework for cooperative multiagent reinforcement learning,'' \emph{arXiv}, Jun. 2020.

\bibitem{Foerster2022} J. Foerster, G. Farquhar, T. Afouras, N. Nardelli and S. Whiteson, ``Counterfactual multi-agent policy gradients,'' in \emph{Proceedings of the AAAI Conference on Artificial Intelligence}, Jun. 2018.

\bibitem{Kraemer2016} L. Kraemer and B. Banerjee, ``Multi-agent reinforcement learning as a rehearsal for decentralized planning,'' \emph{Neurocomputing}, vol. 190, no. 1, pp. 82--94, Feb. 2016.

\bibitem{Oliehoek2018} F. A. Oliehoek, M. T. J. Spaan and N. Vlassis, ``Optimal and approximate Q-value functions for decentralized POMDPs,'' \emph{Journal of Artificial Intelligence Research}, vol. 32, no. 1, pp. 289--353, Jul. 2018.

\bibitem{Hong2022} Y. Hong, Y. Jin and Y. Tang, ``Rethinking individual global max in cooperative multi-agent reinforcement learning,'' in \emph{Advances in Neural Information Processing Systems}, 2022, pp. 32438--32449.

\bibitem{Tan2014} M. Tan, `Multi-agent reinforcement learning: independent vs. cooperative agents,'' in \emph{International Conference on Machine Learning}, 1993, pp. 330--337.

\bibitem{Hu2023} T. Hu, B. Luo, C. Yang and T. Huang, ``MO-MIX: multi-objective multi-agent cooperative decision-making with deep reinforcement learning," \emph{IEEE Transactions on Pattern Analysis and Machine Intelligence}, vol. 45, no. 10, pp. 12098--12112, Oct. 2023.

\bibitem{Samvelyan2019} S. Whiteson, M. Samvelyan and T. Rashid et al.,  ``The StarCraft multi-agent challenge,'' in \emph{Proceedings of the International Joint Conference on Autonomous Agents and Multiagent Systems, AAMAS}, Feb, 2019, pp. 2186--2188.

\bibitem{Busoniu_2006} L. Busoniu, R. Babuska and B. De Schutter, ``Multi-agent reinforcement learning: a survey,'' in \emph{9th International Conference on Control, Automation, Robotics and Vision}, Singapore, 2006, pp. 1--6.

\bibitem{Vincent_2018} T. T. Nguyen, N. D. Nguyen and S. Nahavandi, ``Deep reinforcement learning for multiagent systems: a review of challenges, solutions, and applications," \emph{IEEE Transactions on Cybernetics}, vol. 50, no. 9, pp. 3826--3839, Sept. 2020.

\bibitem{Zhang_2021} K. Zhang, Z. Yang, and T. Başar, ``Multi-agent reinforcement learning: a selective overview of theories and algorithms,” \emph{Handbook of Reinforcement Learning and Control,Studies in Systems, Decision and Control}, 2021, pp. 321--384.

\bibitem{Song_2021} X. Song, ``MADDPG: an efficient multi-agent reinforcement learning algorithm,” in \emph{International Conference on Advanced Algorithms and Neural Networks (AANN 2022)}, Jun. 2022, pp.50--55.

\bibitem{Yu_2022} C. Yu, A. Velu and E. Vinitsky et al., ``The surprising effectiveness of PPO in cooperative multi-agent games,” in \emph{International Conference on Advances in Neural Information Processing Systems}, 2022, pp. 24611--24624.

\bibitem{Iqbal_2018} S. Iqbal and F. Sha, ``Actor-attention-critic for multi-agent reinforcement learning,” in \emph{International Conference on Machine Learning,International Conference on Machine Learning}, Sep. 2019, pp. 2961--2970.

\bibitem{Tampuu_2017} A. Tampuu, T. Matiisen and D. Kodelja et al., ``Multiagent cooperation and competition with deep reinforcement learning,'' \emph{PLOS ONE}, vol. 12, no. 4, pp. e0172395, Apr, 2017.

\bibitem{Sunehag2018} P. Sunehag, G. Lever and A. Gruslys et al., ``Value-decomposition networks For cooperative multi-agent learning based on team reward,'' \emph{Adaptive Agents and Multi-Agents Systems}, Jul. 2018, pp. 2085--2087.

\bibitem{Wei_2022} Q. Wei, Y. Li, J. Zhang, and F. Y. Wang, ``VGN: value decomposition with graph attention networks for multiagent reinforcement learning,” \emph{IEEE Transactions on Neural Networks and Learning Systems}, pp. 1--14, Jan. 2022.

\bibitem{Rashid2020} T. Rashid, M. Samvelyan and C. S. De Witt et al., ``Monotonic value function factorisation for deep multi-agent reinforcement learning,'' \emph{Journal of Machine Learning Research}, vol. 21, no. 1, pp. 7234--7284, Jan, 2020.

\bibitem{Wang2021} J. Wang, Z. Ren, T. Liu, Y. Yu and C. Zhang, ``QPLEX: duplex dueling multi-agent Q-learning,'' in \emph{International Conference on Learning Representations}, May, 2020.

\bibitem{Rashid2021} T. Rashid, G. Farquhar, B. Peng and S. Whiteson, ``Weighted QMIX: expanding monotonic value function factorisation for deep multi-agent reinforcement learning,'' in \emph{Proceedings of the 34th International Conference on Neural Information Processing Systems}, 2020, pp.10199--10210.

\bibitem{Hu2021} J. Hu, S. Jiang, S. A. Harding, H. Wu and S. Liao, ``Rethinking the implementation tricks and monotonicity constraint in cooperative multi-agent reinforcement learning,'' \emph{arXiv}, Feb. 2021.

\bibitem{Son2019} K. Son, D. Kim and W. J. Kang et al., ``QTRAN: learning to factorize with transformation for cooperative multi-agent reinforcement learning,'' \emph{Proceedings of Machine Learning Research,} vol. 97, no. 1, pp. 5887--5896, Jun, 2019.

\bibitem{Ong_W2016} C. W. Ong, S. W. Png, S. Hsu and D. S. Lee, ``POMDPs for robotic tasks with mixed observability,'' \emph{Robotics: Science and Systems V}, Jan. 2016.

\bibitem{Velez2015} F. Doshi-Velez, D. Pfau, F. Wood and N. Roy, ``Bayesian nonparametric methods for partially-observable reinforcement learning," \emph{IEEE Transactions on Pattern Analysis and Machine Intelligence}, vol. 37, no. 2, pp. 394--407, Feb. 2015.

\bibitem{Sutton_Barto1998} R. Sutton and A. Barto, \emph{Introduction to Reinforcement Learning}, Cambridge, MA, USA: MIT Press, 1998.

\bibitem{Mnih2015} V. Mnih, K. Kavukcuoglu and D. Silver et al., ``Human-level control through deep reinforcement learning," \emph{Nature}, vol. 529, pp. 529--533, Feb, 2015.

\bibitem{10.5555/3045390.3045601} Z. Wang, T. Schaul and M. Hessel et al., ``Dueling network architectures for deep reinforcement learning," in \emph{Proceedings of the 33rd International Conference on International Conference on Machine Learning}, 2016, pp. 1995--2003.

\end{thebibliography}
\end{document}